\documentclass[10pt]{article}
\usepackage[accepted]{tmlr}

\usepackage{amsmath,amsfonts,bm}

\def\eqref#1{equation~\ref{#1}}

\def\1{\bm{1}}

\DeclareMathAlphabet{\mathsfit}{\encodingdefault}{\sfdefault}{m}{sl}
\SetMathAlphabet{\mathsfit}{bold}{\encodingdefault}{\sfdefault}{bx}{n}

\usepackage{booktabs}
\usepackage{array}
\usepackage{amsmath}

\usepackage[breaklinks=true,hypertexnames=false,hidelinks]{hyperref}
\usepackage{url}
\usepackage{graphicx,psfrag,epsf,color}
\usepackage{bbm}
\usepackage{amsthm}
\usepackage{amssymb}
\usepackage{multirow}
\usepackage{caption}
\usepackage{subcaption}
\usepackage{wrapfig}
\usepackage{algorithm}
\usepackage{algpseudocode}
\usepackage{xcolor}
\usepackage{cancel}

\usepackage{tabularx}   
\usepackage{booktabs}  
\usepackage{array}

\usepackage{titletoc}
\theoremstyle{plain}
\newtheorem{theorem}{Theorem
}[section]

\newtheorem{lemma}[theorem]{Lemma}

\theoremstyle{definition}
\newtheorem{definition}[theorem]{Definition}

\theoremstyle{remark}

\title{Towards Bridging the Gap Between Offline and Iterative Alignment via Preference Distillation}

\author{\name Wenbo Zhang \email wenbz13@uci.edu \\
      \addr Department of Statistics\\
      University of California, Irvine
      \AND
      \name Wenzhuo Zhou \email wenzhuz3@uci.edu \\
      \addr Department of Statistics\\
      University of California, Irvine
      \AND
      \name Hengrui Cai\thanks{Corresponding author.} \email hengrc1@uci.edu \\
      \addr Department of Statistics\\
      University of California, Irvine
      \AND
      \name Zhengling Qi \email qizhengling@email.gwu.edu \\
      \addr School of Business\\
      George Washington University
}

\def\month{09}
\def\year{2026}
\def\openreview{\url{https://openreview.net/forum?id=X6r1bU1m6x}}

 
\begin{document}

\maketitle

\begin{abstract}
Direct preference optimization \citep[DPO,][]{rafailov2024direct} is a promising offline approach for aligning large language models (LLMs) due to its simplicity, computational efficiency, and implicit modeling of human preferences. Interestingly, iterative extensions of DPO have achieved stronger performance on academic benchmarks, raising two key questions: (i) Why do iterative methods generally outperform offline ones? (ii) Can their advantages be incorporated into offline alignment? To answer the first question, our controlled experiments reveal that the \textit{explicit preference model}, additionally introduced in the iterative procedure, is a key factor behind its superiority over offline methods. This insight leads us to answer the second question affirmatively and propose Distilled Preference Probability Policy Optimization (DP3O), an effective and efficient offline alignment algorithm. DP3O first learns an explicit preference model using a helper class of LLMs and then distills its knowledge into policy optimization. Theoretically, we show that explicit preference modeling admits better estimation error control than implicit formulations, and that DP3O achieves a tighter generalization bound than hard-label DPO through variance reduction. Empirically, we evaluate DP3O on a wide range of chat-based and downstream tasks and show that it outperforms state-of-the-art offline methods, achieves performance comparable to iterative DPO, and reduces training time by about $42\%$, demonstrating both its effectiveness and efficiency.
\end{abstract}

\section{Introduction}
\label{sec: intro}

Aligning large language models (LLMs) using human feedback data can greatly improve the abilities of pre-trained models to follow instructions \citep{ stiennon2020learning,agrawal2023language, rafailov2024direct}. A prominent approach to tackle the problem of learning from human preference data is through reinforcement
learning from human feedback \citep[RLHF, ][]{ziegler2019fine, ouyang2022training,yuan2024rrhf}. While RLHF refines LLMs with impressive conversational and coding abilities, RLHF methods like
proximal policy optimization \citep[PPO, ][]{schulman2017proximal} often struggle with computational inefficiency and instability. In contrast, some recent offline alternatives, such as direct preference optimization
\citep[DPO, ][]{rafailov2024direct} have emerged as a viable solution, because they are simpler to implement and require fewer computing resources by using the policy to implicitly model human preference. 

Despite the practical success of DPO \citep{MistralAI2023,xu2024dpo}, the standard offline version often falls short compared to its \textit{iterative} variant. Typically, the iterative approach starts by training a preference model and then iteratively annotates data generated by the LLMs for further fine-tuning. Recent empirical studies find that despite the training complexities, the iterative algorithms like the iterative DPO \citep{xu2023some} show superior performance in academic benchmarks \citep[see e.g.,][] {tajwar2024preference,tang2024understanding,dong2024rlhf,xiong2024iterative}. This observation raises two interesting questions:

\hspace{0.8cm}\textit{(i) Why do iterative methods generally outperform their offline counterparts?} \\
\hspace*{0.8cm}\textit{(ii) Can we incorporate the strengths of the iterative procedure in purely offline settings? }

\begin{figure}[t]
  \centering
  \vspace{-0.3cm}
  \begin{minipage}[c]{0.35\linewidth}
    \centering
    \includegraphics[width=\linewidth]{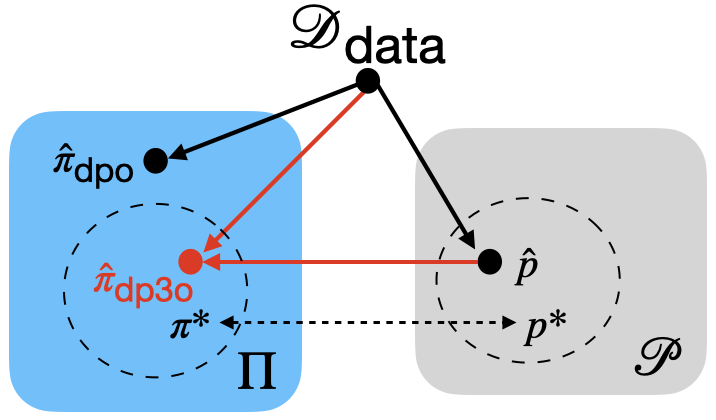}
  \end{minipage}
  \hfill
  \begin{minipage}[c]{0.58\linewidth}
    \caption{\textbf{Intuition behind our method}: Both the explicit preference model $\hat{p}$ and the implicit model $\hat{\pi}_{\text{dpo}}$ aim to approximate the ground-truth preference model $p^*$. For the implicit class, searching for $p^*$ is equivalent to searching for the optimal policy $\pi^*$. In practice, we observe that learning $p^*$ is easier within the explicit preference class $\mathcal{P}$ than within the implicit policy class $\Pi$. As a result, $\hat{\pi}_{\text{DP3O}}$ leverages the fitted explicit model as a form of distillation to improve policy alignment.}
    \label{fig:illustrtaion}
  \end{minipage}
  \vspace{-0.3cm}
\end{figure}

In this paper, we first conduct controlled experiments to address the first question and identify the explicit preference model, which is additionally used in the iterative procedure, as the key factor contributing to the gap between the offline and iterative methods. Motivated by this observation, we propose a novel offline algorithm called Distilled Preference Probability Policy Optimization (DP3O) that synthesizes the strengths of both offline and iterative methods to optimize the alignment of LLMs with human preferences. The intuition of our method is illustrated in Fig. \ref{fig:illustrtaion}. DP3O consists of two steps: (i) learn an explicit preference model from the offline data using a helper class of LLMs, and (ii) distill the knowledge from the explicit model to the implicit one
as soft labels.

The key \textbf{contributions} of our work can be summarized as follows.

$\bullet$ We design a series of controlled experiments and identify the critical role of the explicit preference model in explaining the gap between the offline and the iterative methods. Our empirical results address existing discrepancies in the literature, offering practical guidance for the application of RLHF methods.
\\
$\bullet$
We propose an effective and efficient offline alignment method, DP3O, that incorporates a preference-distilled cross-entropy, which efficiently transfers knowledge from an explicit preference model to an implicit preference framework. Compared with iterative methods, our approach directly distills soft labels from the explicit preference model, reducing the computational cost associated with extensive response generation and annotation. Compared with the offline methods, DP3O achieves better performance by incorporating additional knowledge into policy learning.\\
$\bullet$ We provide a theoretical analysis showing that explicit preference modeling admits better estimation error control than implicit formulations, and that DP3O can generalize better than DPO when the explicit preference model is sufficiently accurate.\\
$\bullet$ Empirically, through extensive evaluations on controlled experiments and real-world chat-based evaluation tasks, including AlpacaEval 2, MT-Bench and also various downstream tasks in Open LLM Leaderboard, DP3O consistently outperforms existing state-of-the-art offline methods. These results establish DP3O as a highly effective framework for LLM alignment. 

\section{Related Works}
\label{related work}
\textbf{RLHF Algorithms}. RLHF plays a crucial role in aligning large language models (LLMs) with human preferences and has been successfully applied in state-of-the-art models such as ChatGPT \citep{achiam2023gpt}, Claude \citep{bai2022training}, and the Llama series \citep{touvron2023llama,dubey2024llama}. Proximal Policy Optimization (PPO) \citep{schulman2017proximal} is one of the most prominent algorithms in the literature on large language model (LLM) alignment. However, it faces notable challenges due to its instability, inefficiency, and sensitivity to implementation specifics \citep{engstrom2020implementation}. Moreover, PPO often requires additional components, such as a reward model and a value network, which significantly increases the demand for GPU memory resources during training \citep{zheng2023secrets}. Thus, researchers have attempted to design alternative approaches for LLM alignment to resolve the aforementioned issues. There are some works aiming to optimize the LLMs without extra components, including \citet{rafailov2024direct,azar2024general,liustatistical,ethayarajh2024kto}. Among these methods, Direct Preference Optimization (DPO) \citep{rafailov2024direct} is the first proposed offline alignment approach, offering a compelling alternative to PPO with improved stability and competitive performance. Our work also relates to variants of DPO that utilize soft labels. CDPO \citep{cdpo2023} and RDPO \citep{chowdhury2024provably} address noisy preference labels by incorporating constant soft labels. Concurrently, \citet{furuta2024geometric} proposes a geometric averaging approach that weights samples based on the preference gap using reward model scores. Additionally, VRPO \citep{ye2025robust} leverages an auxiliary preference model and additional samples from the reference policy to construct an augmented control-variate objective for variance reduction. In reward learning, \citet{zhu2024iterative} introduces iterative data smoothing, updating labels toward learned preferences to mitigate reward overoptimization. Additionally, \citet{rashidinejad2024sail} dynamically updates preference labels during training to create stationary labels, reducing gradients for unreliable samples. Some prior work also distills external reward signals for offline preference optimization \citep{fisch2024robust,chen2024noise,ji2024towards,sun2025reward}. For example, InfoNCA \citep{chen2024noise} uses soft-label distillation, but is formulated as contrastive learning over more than two responses with a softmax parameterization and assumes access to ground-truth rewards rather than binary preferences. Distilled DPO (d-DPO) \citep{fisch2024robust} directly regresses proxy rewards via MSE with an implicit policy. While d-DPO recognizes the benefits of explicit reward modeling, our controlled experiments further identify it as a substantial factor in the offline–iterative gap, and we show both empirically and theoretically that distilling preference \emph{probabilities} offers advantages over hard labels. Moreover, DP3O interpolates between hard-label and soft-label from distillation, allowing a trade-off that mitigates performance loss from imperfect distillation.  There are also some recent studies \citep{lin2024limited,razin2026why} that compare explicit and implicit modeling paradigms, analyzing their differences in terms of generalization.

\textbf{Offline vs. Iterative RLHF.}
Current alignment methods can be broadly classified into two categories: offline and iterative. Offline methods usually refer to contrastive alignment methods, such as DPO and IPO, which utilize static offline preference data to fine-tune the model without generating new preference data. PPO is a notable example of a step-wise iterative approach that generates new data at each step of the training process. Recent research \citep{viethoangtranduong,xiong2024iterative,rosset2024direct,cen2024value,gao2024rebel} has introduced iterative versions of offline methods that leverage an external model or human feedback to provide preference signals. Additionally, \citet{yuanself,wu2024meta} employ the training language model itself through LLM-as-a-Judge prompting to generate feedback for new data. Some studies \citep{zhang2024self,xiong2024iterative,xie2024exploratory} also explore hybrid approaches that combine offline/historical datasets with newly generated data for model updates. Empirically, most findings suggest that iterative variants significantly outperform traditional offline methods. To investigate the reasons behind this performance gap, \citet{tang2024understanding} proposed hypotheses from multiple perspectives, including data characteristics, optimization processes, loss functions, and scaling properties, and conducted empirical experiments to validate or challenge these hypotheses. Additionally, \citet{tajwar2024preference} examined the conditions under which techniques like on-policy sampling and contrastive losses can enhance the performance of offline algorithms. Both studies overlook a critical factor: the quality of the reward model. In this work, we aim to explore this aspect to further illustrate the performance gap between offline and iterative methods.

\section{Preliminary}

\subsection{Problem Formulation and Notations}  
We study learning from preference-based feedback to align LLMs. Define a generic language model as a policy $\pi: \mathcal{X} \rightarrow \Delta(\mathcal{Y})$, which maps a context (prompt) $x \in \mathcal{X}$ to an action (response) $y \in \mathcal{Y}$ based on the probability distribution $\pi(y \mid x)$. Here $\Delta(\cdot)$ is referred to as a class of all probability distributions over the corresponding space. We use $\Pi$ to denote the policy class and $\mathcal{P}$ for the explicit preference model class. Given a pre-trained LLM $\pi_{\text{ref}}$ learned from supervised fine-tuning (SFT), we collect a preference-based dataset. Specifically, a prompt is first generated by a probability distribution $\rho$. Then, a pair of responses $(y_1, y_2)$ is sampled from $\pi_{\text{ref}}$ independently conditioned on $x$. Lastly, a human annotator is asked to label the preference between $y_1$ and $y_2$. We denote $y_w$ as the preferred response and $y_l$ as the less preferred one among $(y_1, y_2)$, i.e., $y_w \succ y_l$ and $(y_w, y_l)$ is the preference order transformation of $(y_1, y_2)$. Assume that the preference order $(y_w, y_l)$ is generated based on the following preference distribution given $(x, y_1, y_2)$.
\begin{equation}\label{def: generic preference distribution}
p^\star\left(y_1 \succ y_2 \mid x\right) = g(r^\star\left(x, y_1\right) - r^\star\left(x, y_2\right)),
\end{equation}
where the preference function $g: \mathbb{R} \rightarrow [0,1]$ is a monotone non-decreasing function satisfying $g(z) + g(-z) = 1$ and $r^\star: \mathcal{X} \times \mathcal{Y} \rightarrow \mathbb{R}$ is the latent true reward function in each human annotator's mind. Then our preference-based dataset consists of $n$ i.i.d. copies of $(x, y_w, y_l)$ denoted by $\mathcal{D}_{\text {off }} = \{x^{(i)}, y_w^{(i)}, y_l^{(i)}\}$. Given $\mathcal{D}_{\text{off}}$, the goal is to find a policy $\pi$ to maximize:
\begin{equation*}
\mathbb{E}_{x \sim \rho, y \sim \pi(\cdot \mid x)}\left[r^{\star}(x, y)\right]-\beta D_{\mathrm{KL}}\left(\pi \| \pi_{\mathrm{ref}}(x)\right),
\end{equation*}
where $D_{\mathrm{KL}}\left(\pi \| \pi_{\text{ref}}\right) := \mathbb{E}_{x \sim \rho, y \sim \pi(\cdot \mid x)} \log  \frac{\pi(y \mid x)}{\pi_{\text{ref}}(y \mid x)}$ is the reverse Kullback–Leibler (KL) divergence penalty, which encourages the learned policy to stay close to $\pi_{\text{ref}}$. The objective is to maximize the true rewards while limiting deviation from the reference policy.

\subsection{RLHF for Alignment}
Since the reward function $r^\star$ is unknown, the RLHF approach is to first learn $r^\star$ in (\ref{def: generic preference distribution}) based on $\mathcal{D}_{\text {off }}$. For example, let $g(z) = 1 / (1 + \exp(-z))$ and Model (\ref{def: generic preference distribution}) becomes the Bradley-Terry (BT) \citep{bradley1952rank}, which is arguably the most popular preference model. Then one can get an estimated reward function by minimizing the cross-entropy loss with respect to $\phi$: 
\begin{equation}
-\mathbb{E}_{\left(x, y_w, y_l\right) \sim \mathcal{D}_{\text {off }}}\left[\log \sigma\left(r_\phi\left(x, y_w\right)-r_\phi\left(x, y_l\right)\right)\right],
\label{reward}
\end{equation}
where $\phi$ is a parameter used to model $r^\star$. Once the resulting estimator $\widehat \phi$ is obtained, in the second step of the direct approach, one can learn the policy $\hat \pi$ by
\begin{equation}
\max _{\theta} \,  \mathbb{E}_{x \sim \rho, y \sim \pi_\theta(\cdot \mid x)}\left[ r_{\widehat \phi}(x, y)\right]-\beta D_{\mathrm{KL}}\left(\pi_\theta \| \pi_{\mathrm{ref}}\right).
\label{rlhf}
\end{equation}
Solving (\ref{rlhf}) typically involves iterative sampling and optimization. Algorithms like PPO and GRPO \citep{schulman2017proximal,shao2024deepseekmath} have shown promise in aligning LLMs with human preferences but face significant limitations. They require multiple components, such as value networks and reward models, which demand substantial GPU memory, introduce training instability, and increase computational complexity with numerous hyperparameters \citep{zheng2023secrets,ivison2024unpacking}.

\subsection{Direct Preference Optimization and Its Iterative Variant}
While RLHF has been shown to achieve remarkable empirical success, \citet{rafailov2024direct} argued that RLHF, as a two-step approach, is a complex and often unstable procedure due to fitting a reward model in (\ref{reward}) and performing penalized policy learning in (\ref{rlhf}). Instead of using the estimate-reward-then-optimize approach, \citet{rafailov2024direct} proposed DPO, which directly learns $\pi^\ast$ by utilizing the policy model for the reward model. In particular, their approach is motivated by the following observation \citep{peters2007reinforcement,peng1910advantage}: the closed-form solution of (\ref{rlhf}) with a generic function $r$ for any $(x, y)$ is
\begin{equation*}
    \pi_\theta(y \mid x)=\frac{1}{Z(x)} \pi_{\mathrm{ref}}(y \mid x) \exp \left(\frac{1}{\beta}  r(x, y)\right),
\end{equation*} 
where $Z(x)=\sum_y \pi_{\mathrm{ref}}(y \mid x) \exp \left(  r(x, y)/\beta\right)$ is called the partition function. Then equivalently $r$ can be represented via $\pi_\theta$:
\begin{equation}
r(x, y)=\beta \log \frac{\pi_\theta(y \mid x)}{\pi_{\text {ref }}(y \mid x)}+\beta \log Z(x),  
\label{reward_rep}
\end{equation}
which we term implicit preference modeling.
Thanks to the implicit preference modeling, together with equation (\ref{reward}), DPO method proposes learning $\pi^\star$ via minimizing $\mathcal{L}_{\mathrm{DPO}}\left(\pi_\theta ; \pi_{\text {ref }}\right)$:
\begin{equation}
   -\mathbb{E}_{\left(x, y_w, y_l\right) \sim \mathcal{D}_{\text{off}}}
     \left[\log \sigma\left(\beta \log \frac{\pi_\theta\left(y_w \mid x\right)}{\pi_{\text {ref }}\left(y_w \mid x\right)}
     +\cancel{\beta \log Z(x)}\right. \right. \quad\left.\left. -\beta \log \frac{\pi_\theta\left(y_l \mid x\right)}{\pi_{\text {ref }}\left(y_l \mid x\right)}-\cancel{\beta \log Z(x)}\right)\right],
\label{dpo obj}
\end{equation}

with respect to $\theta$. Denote the estimated optimal policy from (\ref{dpo obj}) as $\widehat \pi_{\text{DPO}}$. We call this DPO policy as an \textbf{implicit preference model}.

\label{iter dpo}
\textbf{Iterative DPO}: Iterative DPO 
\citep[e.g.,][]{xu2023some,xiong2024iterative} was introduced to improve DPO's empirical performance with three stages. In the first stage, offline pairwise data is used to fit an \textbf{explicit preference model}. In the second stage, assume at iteration $t$, given prompt set distribution $\rho_t$, for each prompt $x\sim \rho_t$, the current policy $\pi_t$ generates $K$ responses $\{y_{ k}\}_{k=1}^K$, and the fitted preference model is used to identify the most and least preferred pairs based on the highest preference probability. 
\begin{equation}\label{eqn: margin maximization}
 y_w,y_l=\underset{y_{i}, y_{j} \in \{y_{ k}\}_{k=1}^K}{\arg \max } \quad p_{\widehat \phi}( y_{ i}\succ y_{j}\mid x) .
\end{equation}
We can denote $y_w\sim \pi^{(K)}_t(\cdot\mid x)$ and $y_l\sim \pi^{(1)}_t(\cdot\mid x)$, where two policies are called best-of-n and worst-of-n policies respectively \citep{pace2024west,gui2024bonbon}. Under BT parameterization, this is equivalent to maximizing the reward difference. In the last stage, standard DPO is then applied to the sampled data to update the policy. The second and third stages can be repeated for $T$ iterations, with each iteration typically using a non-overlapping subset of prompts comprising $ {1}/{T}$ of the total.

Iterative DPO is computationally expensive due to repeated response generation and annotation. Hence we propose DP3O, an effective and efficient offline algorithm that incorporates advantages of iterative methods.

\section{DP3O: Preference Probability Distilled Policy Optimization}
 \label{sec: method}
\subsection{Explicit Preference Model Matters}\label{sec_conj}

\begin{figure*}[!t]
    \centering
     \vspace{-0.2cm}
    \begin{subfigure}{0.3\textwidth}
        \includegraphics[width=\textwidth]{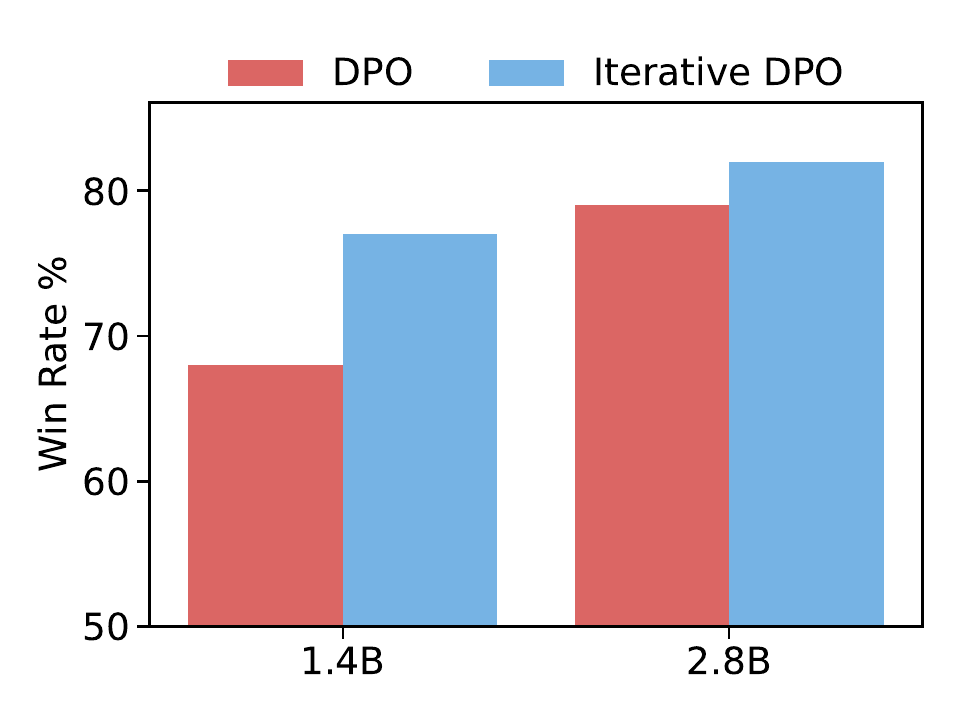}
        \caption{Offline vs Iterative}
        \label{fig: Offline vs iterative}
    \end{subfigure}~
        \begin{subfigure}{0.38\textwidth}
        \includegraphics[width=\textwidth]{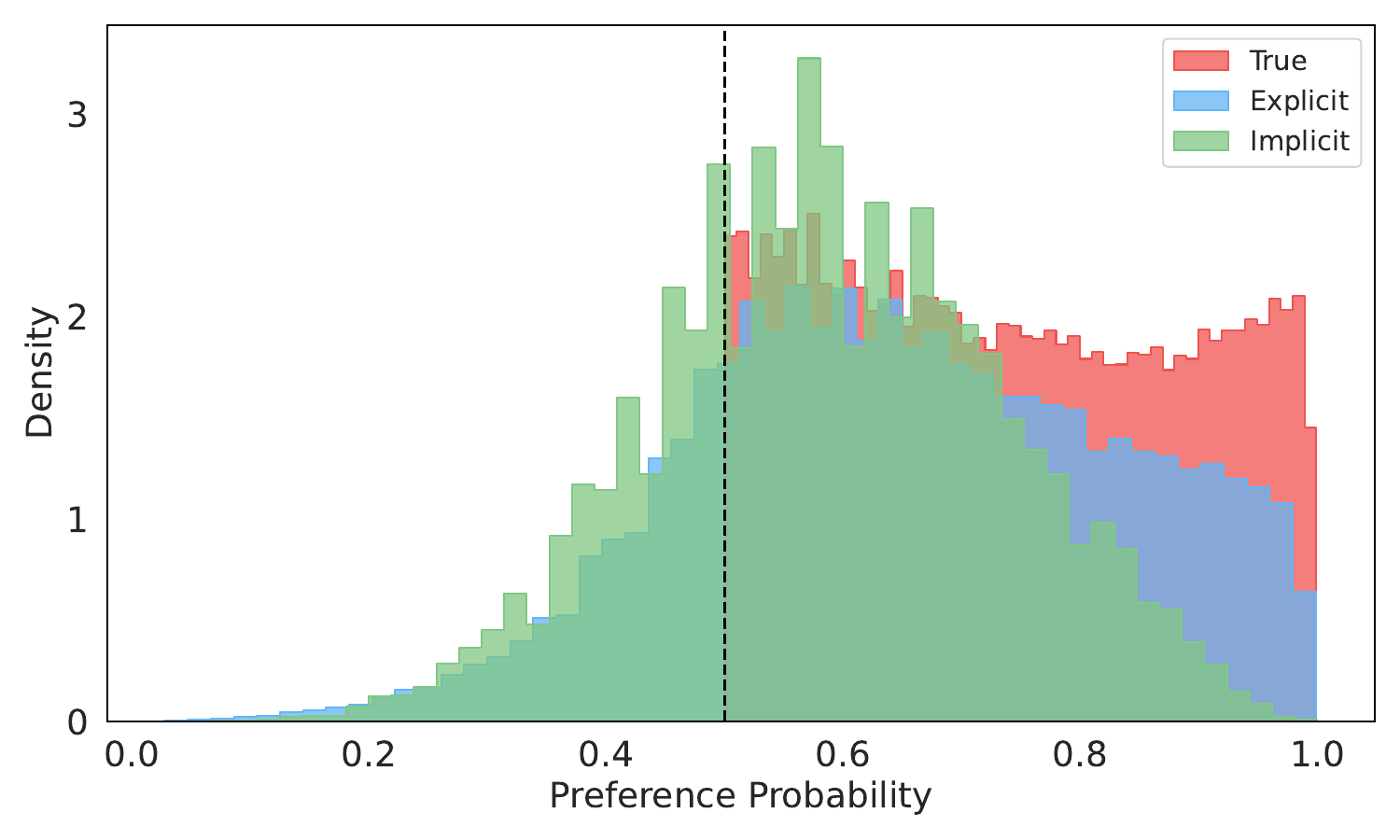}
        \caption{Preference Distribution}
        \label{fig: Preference Distribution}
    \end{subfigure}~
    \begin{subfigure}{0.31\textwidth}
        \includegraphics[width=\textwidth]{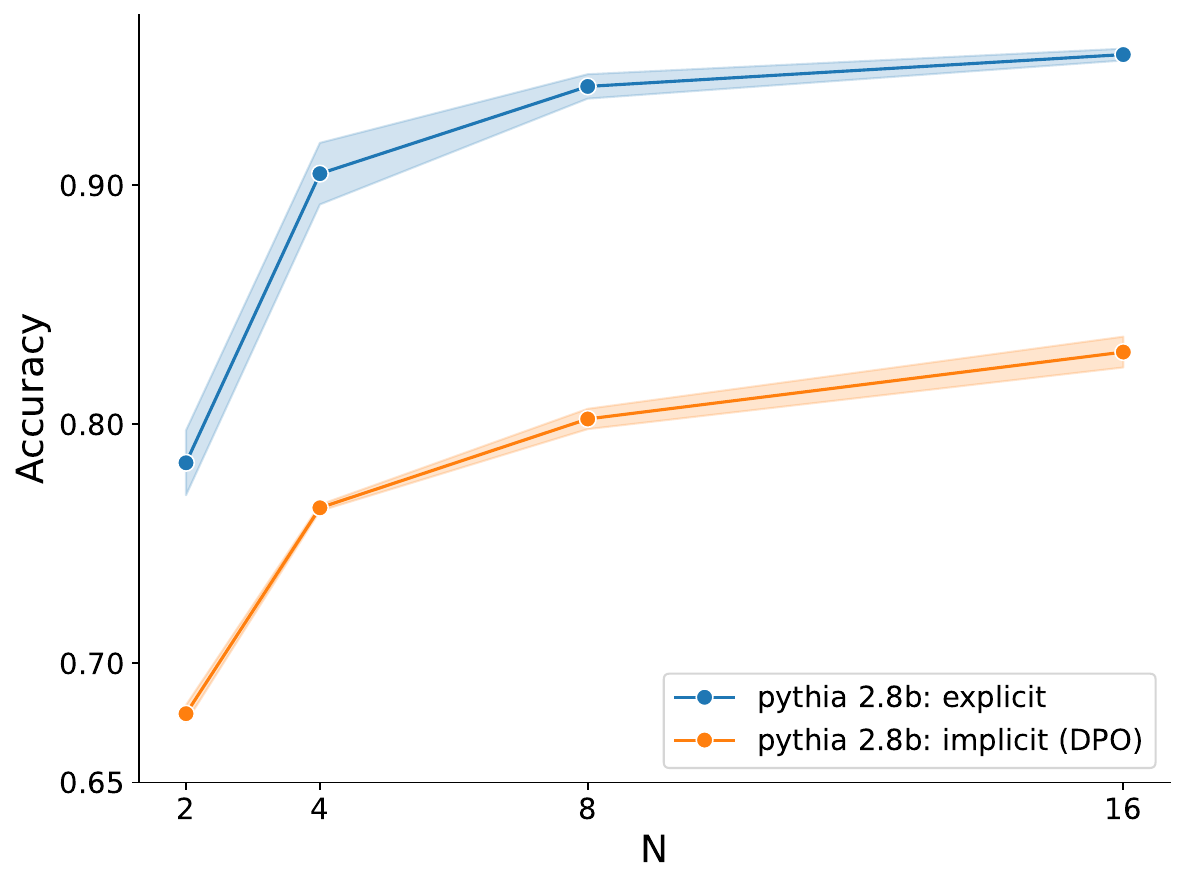}
        \caption{Best-of-N Classification}
        \label{fig: Best-of-N Classification}
    \end{subfigure}
    \caption{(a) Performance comparison between offline and iterative DPO methods, measured by win rate compared with SFT. Iterative DPO consistently outperforms its offline variant. (b) The distribution of preference probabilities annotated by the implicit and explicit preference model. Explicit preference probability is closer to the true probability
(c) Classification accuracy of preference models on generated samples shows that explicit models outperform the implicit model.
 }
\end{figure*}

Iterative DPO has been observed to outperform offline DPO in two distinct settings: in the purely offline case, where only a fixed dataset is available \citep{tang2024understanding,swamy2025all}, and in scenarios where the preference model is trained on a large corpus of external preference data, as opposed to a relatively smaller offline dataset \citep{tajwar2024preference,xiong2024iterative}. In the latter case, the performance improvement is expected, as iterative methods have access to more extensive and diverse data than offline DPO. In contrast, in the purely offline setting, one would anticipate similar performance between two methods, given that both approaches rely on the same dataset. According to the data processing inequality \citep{mackay2003information}, transforming raw data through a learned preference model cannot increase its information content; therefore, additional sampling from the initial policy should not provide gains beyond the original dataset.

We aim to investigate the factors contributing to the performance gap between offline and iterative DPO methods when trained on the same initial dataset. 

\textbf{Controlled Experimental Setup.}
We conduct controlled experiments on the Anthropic HH-RLHF dataset \citep{bai2022training}. We first train a golden reward model $r^*$ based on Mistral-7B-Instruct-v0.2 \citep{jiang2023mistral}, and then use it to annotate and rerank pairwise preference data under the Bradley--Terry (BT) model. We also use this golden reward model for evaluation. The overall pipeline for data generation, training, and evaluation is illustrated in Appendix Fig.~\ref{fig:framework}.

For the backbone models, we use the supervised fine-tuned Pythia series \citep{biderman2023pythia}, including Pythia-1.4B-SFT and Pythia-2.8B-SFT. We set the number of iterations to $3$. For explicit preference modeling, we adopt the Bradley--Terry parameterization and initialize the preference model from the same SFT checkpoint as that used by both offline and iterative DPO. The explicit preference model contains an additional linear head for reward prediction. In contrast, offline DPO optimizes preferences only implicitly through the policy objective in Eq.~(\ref{dpo obj}), without introducing a separate preference model. To control confounders, we use the same optimizer, learning rate, and batch size across all experiments.

\textbf{A Performance Gap Exists.}
We begin by comparing offline DPO and iterative DPO. As shown in Fig.~\ref{fig: Offline vs iterative}, the iterative variant consistently outperforms the offline ones when varying model sizes. This result suggests that iterative alignment benefits from access to additional preference information accumulated across iterations.

Prior work \citep{tajwar2024preference,song2024understanding,xiong2024iterative} has largely attributed the advantage of iterative methods to on-policy data generation, which allows the model to learn from its own responses and gradually correct its mistakes. While this explanation is likely part of the story, our results suggest that it is not the whole story. Another important factor is the use of \emph{explicit} preference modeling.

To isolate this factor, we distinguish between two forms of preference modeling:

\emph{Explicit preference model:}
$p_\phi(y_1 \succ y_2 \mid x) = \sigma\left(r_\phi(x,y_1)-r_\phi(x,y_2)\right)$.

\emph{Implicit preference model:}
$p_\theta(y_1 \succ y_2 \mid x) = \sigma \left( \log\frac{\pi_\theta(y_1\mid x)}{\pi_{\mathrm{ref}}(y_1\mid x)} - \log\frac{\pi_\theta(y_2\mid x)}{\pi_{\mathrm{ref}}(y_2\mid x)} \right)$.

The key distinction is that the explicit model directly parameterizes logits through a separate reward function based on hidden representations, whereas the implicit model represents preferences through policy likelihood ratios. Since iterative methods maintain an explicit preference model while offline DPO does not, this distinction provides a plausible explanation for part of the observed performance gap.

\textbf{Explicit Preference Models Better Capture Preferences.}
We next compare the two types of preference models directly. Specifically, we evaluate them from two perspectives: (1) how well they fit preference distribution, and (2) how accurately they judge preferences among generated responses.

For preference distribution fitting, we use each trained model to estimate preference probabilities on the validation data. As shown in Fig.~\ref{fig: Preference Distribution}, the explicit model matches the ground-truth preference probabilities more accurately than the implicit model, even though both are trained under the same optimization settings. This result indicates that the explicit model achieves better preference distribution fitting.

To evaluate preference judgment, we test whether each model can correctly identify the most preferred and least preferred responses among multiple candidates generated from test prompts. For each prompt, we sample between $2$ and $16$ candidate responses, ask each model to select the best and worst responses, and compute the corresponding selection accuracy. Higher accuracy indicates stronger preference judgment ability. As shown in Fig.~\ref{fig: Best-of-N Classification}, the explicit model consistently outperforms the implicit model across all settings, demonstrating stronger preference discrimination ability during generation.

\textbf{Motivation.}
This raises an important question: can we leverage the information captured by the explicit preference model to improve the training of the implicit model used in offline alignment without online data generation? If we can distill knowledge from the explicit model, we may be able to narrow the performance gap between offline and iterative methods. This insight forms the core motivation for our DP3O algorithm.

\subsection{Algorithm}

Firstly, based on (\ref{reward_rep}), we parameterize the implicit preference model $p_\theta(y_w \succ y_l \mid x)$ as
$$
p_\theta(y_w \succ y_l \mid x)
=
\sigma\left(
\beta \log \frac{\pi_\theta(y_w \mid x)}{\pi_{\text{ref}}(y_w \mid x)}
-
\beta \log \frac{\pi_\theta(y_l \mid x)}{\pi_{\text{ref}}(y_l \mid x)}
\right).
$$

The DPO objective in (\ref{dpo obj}) can be viewed as a cross-entropy loss over preference labels. Specifically, let
$
H(p,q) := - p \log q - (1-p)\log(1-q)
$
denote the binary cross-entropy between two Bernoulli probabilities $p$ and $q$, and let
$
H(p) := - p \log p - (1-p)\log(1-p)
$
denote the binary entropy. Then,
\begin{equation}
\mathcal{L}_{\text{DPO}}
=
\mathbb{E}[H(p^*,p_\theta)]
=
\mathbb{E}[D_{\mathrm{KL}}(p^*\|p_\theta)] + H(p^*),
\label{obj: dpo entropy}
\end{equation}
where $H(p^*)$ is constant with respect to $\theta$. Therefore, minimizing (\ref{obj: dpo entropy}) is equivalent to minimizing $D_{\mathrm{KL}}(p^*\|p_\theta)$.

Motivated by this, we introduce a distillation objective defined as the cross-entropy between $p_{\hat\phi}$ and $p_\theta$:
$$
\mathcal{L}_{\mathrm{distill}}
:=
\mathbb{E}_{(x,y_1,y_2)\sim\mathcal{D}_{\text{off}}}
H(p_{\hat\phi},p_\theta),
$$
which is equivalent to minimizing $D_{\mathrm{KL}}(p_{\hat\phi}\|p_\theta)$ up to the constant entropy term $H(p_{\hat\phi})$.

We then combine the distillation objective with the original DPO objective:
\begin{equation}
\begin{aligned}
\mathcal{L}_{\mathrm{DP3O}}(\pi_\theta;\pi_{\text{ref}},p_{\hat\phi})
=&\ \alpha \mathcal{L}_{\text{DPO}}(\pi_\theta;\pi_{\text{ref}})
+(1-\alpha)\mathcal{L}_{\mathrm{distill}}(\pi_\theta;\pi_{\text{ref}},p_{\hat\phi}) \\
=&\ -\mathbb{E}_{(x,y_1,y_2)\sim\mathcal{D}_{\text{off}}}\Big[
((1-\alpha)p^+_{\hat\phi}+\alpha)\log p^+_\theta
+(1-\alpha)p^-_{\hat\phi}\log p^-_\theta
\Big],
\end{aligned}
\label{obj: dp3o}
\end{equation}

which corresponds to interpolating between hard preference labels and soft targets induced by $p_{\hat{\phi}}$. $\alpha \in [0,1]$ is an imitation parameter balancing the influence of the soft labels provided by the explicit preference model and hard labels,  and $p_{\bullet}^{+} = p_{\bullet}(y_w \succ y_l \mid x)$, $p_{\bullet}^{-} = p_{\bullet}(y_w \prec y_l \mid x)$ ($\bullet$ denotes a placeholder for the model's parameterization.). This objective can be viewed as a preference-weighted optimization that leverages the adjusted distilled preference probabilities $(1-\alpha)p^+_{\widehat \phi}+\alpha$ and $(1-\alpha)p^-_{\widehat \phi}$ as soft targets. Accordingly, we refer to this approach as Distilled Preference Probability Policy Optimization (DP3O). Alternatively, it can be interpreted as a form of dynamic label smoothing, where the smoothing factor is given by the adjusted preference probability. 

In practice, DP3O is implemented as a two-stage approach. First, the explicit preference model $p_\phi$ is fitted using $\mathcal{D}_{\text{off}}$. Next, the predicted preference probability $p_{\widehat{\phi}}$ is used as a plug-in to solve (\ref{obj: dp3o}) and obtain the optimized $\theta$, as outlined in Algorithm \ref{algo}. 

Compared with iterative methods, our method doesn't have the heavy computational burden and hence is highly efficient. Our approach can be viewed as a middle ground between DPO and iterative methods, combining the efficiency and stability of offline training with the guidance of explicit preference modeling from iterative/online RLHF. As a result, it offers a balanced trade-off between performance and computational cost, making it a practical and scalable solution for preference optimization.

Unlike related soft-label methods \citep{ji2024towards,sun2025reward}, our approach preserves the original DPO objective and introduces an imitation term to bridge the objectives. These methods can be seen as special cases of ours, and experiments underscore the value of retaining the DPO objective.

\begin{algorithm}[!t]
\caption{DP3O}
\label{algo}
\begin{algorithmic}[1]
\Require $\mathcal{D}_{\text{off}}=\left\{x, y_w, y_l\right\}_{i=1}^n$

\State Train explicit preference model $p_{\widehat \phi}$ on $\mathcal{D}_{\text{off}}$ by minimizing
$
-\mathbb{E}_{(x, y_w, y_l) \sim \mathcal{D}_{\text{off}}}
\left[
\log 
p_{\widehat \phi}(y_w\succ y_l\mid x)
\right]
$
\State For each $\left\{x, y_w, y_l\right\}\in \mathcal{D}_{\text {off }}$, use $\hat{p}_\phi$ to score the preference probability.
\State Optimize $\mathcal{L}_{\mathrm{DP3O}}\!\left(
\pi_\theta ; \pi_{\text{ref}}, p_{\widehat \phi}
\right)$ in (\ref{obj: dp3o}) to obtain $\pi_{\widehat \theta}$

\State \Return $\widehat{\pi} = \pi_{\widehat \theta}$
\end{algorithmic}
\end{algorithm}

We also conduct a gradient analysis on $\mathcal{L}_{\mathrm{DP3O}}$ to further understand our method. Note that
\begin{equation*}
\begin{aligned}
& \nabla_\theta \mathcal{L}_{\mathrm{DP3O}}\left(\pi_\theta ; \pi_{\text{ref}},p_{\widehat \phi} \right) \\
= &  \nabla_\theta - \mathbb{E} \Big[
    ((1-\alpha)p^+_{\widehat \phi}+\alpha)  \log p^+_\theta  
    +(1-\alpha)p^-_{\widehat \phi} \log p^-_\theta
\Big] \\
= & - \beta \mathbb{E}\Bigg[
    \left( 
\underbrace{\alpha+(1-\alpha)p^+_{\widehat \phi} - \sigma\!\left({r}_\theta(x, y_w) - {r}_\theta(x, y_l)\right)}_{\text{difference of pref prob. between exp \& implicit}}
    \right)  
\cdot  \quad \left[\underbrace{\nabla_\theta \log \pi(y_w \mid x)}_{\text{increase likelihood of } y_w}
   -\underbrace{\nabla_\theta \log \pi(y_l \mid x)}_{\text{decrease likelihood of } y_l}\right]
\Bigg],
\end{aligned}
\end{equation*}

where $r_\theta(x, y)=\beta \log  {\pi_\theta(y \mid x)}/{\pi_{\mathrm{ref}}(y \mid x)}$ is the reward implicitly defined by the language model $\pi_\theta$ and reference model $\pi_{\text {ref }}$. Intuitively, the gradient of $\mathcal{L}_{\mathrm{DP3O}}$ leverages the difference between the adjusted explicit preference probability $\alpha+(1-\alpha)p^+_{\widehat \phi}$ and the implicit preference probability $\sigma(r_\theta(x, y_w) - r_\theta(x, y_l))$ to adjust response likelihoods. A positive difference, indicating the implicit model is underconfident, increases the likelihood of $y_w$ and decreases that of $y_l$, improving confidence. Conversely, a negative difference, indicating overconfidence, decreases the likelihood of $y_w$ and increases that of $y_l$, mitigating overconfidence.

\subsection{Theoretical Analysis}

\subsubsection{Estimation Error of the Preference Model}

We have empirically observed that the explicit preference model is easier to fit than the implicit preference model. To better understand this difference, we next study the estimation error of the preference model.
\begin{theorem}
Let $p^\star(y_1\succ y_2\mid x)$ denote the true pairwise preference probability for $x\in\mathcal X$ and $y_1,y_2\in\mathcal Y$. Let $\rho\in\Delta(\mathcal X)$ be a prompt distribution and $\pi(\cdot\mid x)\in\Delta(\mathcal Y)$ a response distribution. For each $i\in[n]$, sample $x^{(i)}\sim\rho$ and $y_1^{(i)},y_2^{(i)}\overset{\mathrm{i.i.d.}}{\sim}\pi(\cdot\mid x^{(i)})$, then draw a preference according to $p^\star(y_1^{(i)}\succ y_2^{(i)}\mid x^{(i)})$. Let $(y_w^{(i)},y_l^{(i)})$ be the induced winner-loser ordering, and define $\mathcal D=\{(x^{(i)},y_w^{(i)},y_l^{(i)})\}_{i=1}^n$.

Let $\mathcal F$ be a finite class of functions
$p:\mathcal X\times\mathcal Y\times\mathcal Y\to[0,1]$
such that $p^\star\in\mathcal F$, and assume that
$0<p(y_1\succ y_2\mid x)<1$
for all $p\in\mathcal F$ and all $(x,y_1,y_2)\in\mathcal X\times\mathcal Y\times\mathcal Y$.
Let
$$
\widehat p
:=
\arg\max_{p\in\mathcal F}
\sum_{(x,y_w,y_l)\in\mathcal D}\log p(y_w\succ y_l\mid x).
$$
Then for any $\delta\in(0,1)$, with probability at least $1-\delta$,
$$
\mathbb E_{x\sim\rho,\;y_1,y_2\sim\pi(\cdot\mid x)}
\left[
D_{\mathrm H,\mathrm{Ber}}^2\bigl(\hat p(y_1\succ y_2\mid x),\,p^\star(y_1\succ y_2\mid x)\bigr)
\right]
\le
\frac{2\log(|\mathcal F|/\delta)}{n},
$$
where $D_{\mathrm H,\mathrm{Ber}}^2(a,b):=D_{\mathrm H}^2(\mathrm{Ber}(a),\mathrm{Ber}(b))$ and $D_{\mathrm H}^2(P,Q):=\int \bigl(\sqrt{p(z)}-\sqrt{q(z)}\bigr)^2\,dz$.
\label{thm:estimation_error}
\end{theorem}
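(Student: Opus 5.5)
The plan is the standard MLE-with-finite-class argument: a Chernoff bound on the log-likelihood ratio combined with the Bhattacharyya/Hellinger identity, then a union bound over $\mathcal F$. First I recast the data as Bernoulli observations. For sample $i$, let $z^{(i)}=\mathbb 1\{y_1^{(i)}\succ y_2^{(i)}\}$. Then
$$
\log p(y_w^{(i)}\succ y_l^{(i)}\mid x^{(i)})=z^{(i)}\log p_i+(1-z^{(i)})\log(1-p_i),\qquad p_i:=p(y_1^{(i)}\succ y_2^{(i)}\mid x^{(i)}).
$$
This rewriting uses the convention $p(y_2\succ y_1\mid x)=1-p(y_1\succ y_2\mid x)$ for members of $\mathcal F$, which is the implicit pairwise-preference convention; I will state it explicitly. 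Under it, the estimator $\widehat p$ is exactly the conditional Bernoulli MLE over $\mathcal F$. The triples $(x^{(i)},y_1^{(i)},y_2^{(i)},z^{(i)})$ are i.i.d. By the assumption $0<p<1$, all log-ratios are finite.

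Next, fix $p\in\mathcal F$ and set $\ell_i(p):=\tfrac12\log\bigl(\mathrm{Ber}(z^{(i)};p_i)/\mathrm{Ber}(z^{(i)};p^\star_i)\bigr)$. Conditioning on $(x,y_1,y_2)$ gives
$$
\mathbb E\bigl[e^{\ell_i(p)}\bigr]=\mathbb E_{x,y_1,y_2}\Bigl[\sqrt{p_ip_i^\star}+\sqrt{(1-p_i)(1-p_i^\star)}\Bigr]=1-\tfrac12\,\mathbb E\bigl[D^2_{\mathrm H,\mathrm{Ber}}(p_i,p_i^\star)\bigr].
$$
The last equality uses the paper's unnormalized definition $D_{\mathrm H}^2=\int(\sqrt p-\sqrt q)^2 = 2(1-\mathrm{BC})$, where BC is the Bhattacharyya coefficient. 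By independence and Markov's inequality,
$$
\Pr\Bigl(\textstyle\sum_i \ell_i(p)\ge n\log \mathbb E[e^{\ell_1(p)}]+\log(|\mathcal F|/\delta)\Bigr)\le \delta/|\mathcal F|.
$$
A union bound over $\mathcal F$ makes this hold simultaneously for all $p$ with probability at least $1-\delta$.

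Finally, apply this to $p=\widehat p$, which is allowed because the bound is uniform over $\mathcal F$. Since $p^\star\in\mathcal F$ and $\widehat p$ maximizes the likelihood, $\sum_i\ell_i(\widehat p)\ge 0$. Using $\log(1-u)\le -u$, this gives
$$
0\le -\tfrac n2\,\mathbb E\bigl[D^2_{\mathrm H,\mathrm{Ber}}(\widehat p,p^\star)\bigr]+\log(|\mathcal F|/\delta),
$$
which rearranges to the claimed bound $2\log(|\mathcal F|/\delta)/n$. The expectation here is over a fresh $(x,y_1,y_2)\sim\rho\times\pi\times\pi$ with $\widehat p$ held fixed, which is exactly the quantity in the statement.

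I expect the main obstacle to be bookkeeping rather than analysis. First, I must check that the winner–loser likelihood coincides with the Bernoulli likelihood of the unordered pair, which is where the symmetry convention on $\mathcal F$ enters. Second, I must apply the exponential-moment bound to the data-dependent $\widehat p$ only after the union bound has made it uniform over $\mathcal F$.
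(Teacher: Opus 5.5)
Your proof is correct and is essentially the paper's argument: Markov's inequality on the square-root likelihood ratio, the Bernoulli Hellinger--affinity identity, a union bound over the finite class, and the fact that the MLE's likelihood is at least that of $p^\star\in\mathcal F$. The paper unions over the bad set $\{p:\Delta(p)>\varepsilon\}$ instead of proving a uniform bound and plugging in $\widehat p$, which is a cosmetic difference, and it also uses the symmetry convention you make explicit, only implicitly. Your constant bookkeeping is the right one for the unnormalized definition, where the affinity equals $1-\tfrac12 D^2_{\mathrm H,\mathrm{Ber}}$; the paper's stated identity drops this $\tfrac12$, which is harmless only because its choice $\varepsilon=2\log(|\mathcal F|/\delta)/n$ leaves exactly a factor of $2$ of slack.
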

The proofs are deferred to Appendix~\ref{appendix theory}. The theorem bounds the estimation error of the learned preference model by
${2\log(|\mathcal F|/\delta)}/{n}$,
where $|\mathcal F|$ is the size of the hypothesis class. Since real preference models are usually large neural networks with high-dimensional parameter spaces rather than a literal finite class, Theorem~\ref{thm:estimation_error} mainly captures the scaling of estimation error with the effective hypothesis-class size, rather than providing a sharp finite-sample guarantee.

In our controlled setting, the explicit and implicit preference models share the same backbone and training data, but induce different hypothesis classes due to their parameterizations. Let $\mathcal P$ and $\Pi$ denote the hypothesis classes induced by the explicit and implicit parameterizations, respectively. 

In particular, the implicit class $\Pi$ can be richer than the explicit class $\mathcal{P}$. Even with the upstream backbone shared across the two families, the implicit head reuses the LM head $\mathbb{R}^d\!\to\!\mathbb{R}^V$ with $V\!\times\!d$ parameters and yields a reward realized as a sum of $T$ per-token log-ratios over a $V$-dim vocabulary, while the explicit BT head is a single $\mathbb{R}^d\!\to\!\mathbb{R}$ linear layer with only $d$ parameters that collapses every response to a scalar. This structural gap is further confirmed by a empirical randomization-test experiment inspired by~\citet{zhang2017understanding}: with an identical backbone and same optimization, the implicit family can fully memorize randomly flipped preference labels, while the explicit family stays at the random-guessing and cannot memorize them. We defer the parameter analysis, the randomization-test experimental results to Appendix~\ref{appendix:capacity}.

Therefore, under the same model scale, the theorem suggests that the implicit parameterization may admit a looser estimation bound, and hence potentially slower statistical convergence, than the explicit one. We note that this comparison is specific to the matched-scale setting and does not necessarily generalize to cases where the explicit and implicit models have different scales.

\subsubsection{Understanding the Benefits of Preference Distillation via Risk Decomposition}
The previous theorem suggests that the explicit preference model $\hat p$ can better approximate the true preference distribution $p^*$. We now explain how this benefits the alignment policy $\pi$ when $\pi$ is viewed as an implicit preference model $p_\pi$. Specifically, we study how distillation from $\hat p$ affects the risk of $p_\pi$ with respect to $p^*$ via a risk decomposition. In the analysis below, we treat $\hat p$ as fixed and do not model the dependence induced by training both models on the same data.

 \begin{definition} (Population risk and Empirical risk). We define the population risk and empirical risk of DPO and DP3O, respectively, as:
\begin{align*}
&R({\pi}) 
 := -\mathbb{E}_{\mathcal{D}}\Big[ 
p^{*}\left(x,y_l,y_w\right)^{\top} \log p_{\pi}\left(x,y_l,y_w \right) 
\Big], \\
&\hat{R}_{\text{DPO}}(\pi ; \mathcal{D}) 
 := -\frac{1}{n} \sum_{i=1}^{n} 
\log p_{\pi}\left(y^{(i)}_w \succ y^{(i)}_l \mid x^{(i)}\right), \\
&\hat{R}_{\text{DP3O}}(\pi ; \mathcal{D},\hat p) 
 := -\frac{1}{n} \sum_{i=1}^{n} \Big[ 
\hat p_\alpha\left(x^{(i)},y_l^{(i)},y_w^{(i)}\right)^{\top} \log p_{\pi}\left(x^{(i)},y_l^{(i)},y_w^{(i)} \right) 
\Big].
\end{align*}
where $p(x, y_l, y_w) := [\,p(y_w \succ y_l \mid x),\, p(y_w \prec y_l \mid x)\,]$
for the corresponding binary preference vector, and define its smoothed version as
$p_\alpha(x, y_l, y_w) := [\,(1-\alpha)p(y_w \succ y_l \mid x)+\alpha,\,(1-\alpha)p(y_w \prec y_l \mid x)\,]$.
\end{definition}

The population risk measures the expected alignment between the implicit policy and the true preference distribution, while the empirical risk is its finite-sample approximation. We then show that using true preference probabilities in DP3O yields a lower-variance objective than DPO.

\begin{lemma}
For any policy $\pi\in \Pi$ and $\mathcal{D} \sim \mathbb{P}_{\mathcal{D}}:$
\begin{equation*}\label{lemma}
\mathbb{V}_{\mathcal{D}}\hat{R}_{\text{DP3O}}(\pi ; \mathcal{D}, p^*)\leq \mathbb{V}_{\mathcal{D}}\hat{R}_{\text{DPO}}(\pi ; \mathcal{D}),
\end{equation*}
where $\mathbb{V}$ denotes variance.
\end{lemma}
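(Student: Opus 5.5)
Since both empirical risks are averages of $n$ i.i.d. per-sample terms, it suffices to compare the variance of a single summand. I write the data $(x,y_1,y_2)$ together with the random label. Let $q=p^*(y_1\succ y_2\mid x)$, let $a=\log p_\pi(y_1\succ y_2\mid x)$, and let $b=\log p_\pi(y_2\succ y_1\mid x)$. The DPO summand is $\ell_{\text{DPO}}=-(Za+(1-Z)b)$, where $Z\sim\mathrm{Ber}(q)$ given $(x,y_1,y_2)$ indicates whether $y_1$ wins. Writing the DP3O summand in the same unordered coordinates, and using $p^*_\alpha$ as the smoothing, gives
\[
\ell_{\text{DP3O}}=-\Bigl(\alpha\,(Za+(1-Z)b)+(1-\alpha)\,(qa+(1-q)b)\Bigr).
\]
This holds because the soft part $(1-\alpha)p^{*\top}\log p_\pi$ is symmetric in the ordering, so it does not depend on which response won. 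The hard part, weighted by $\alpha$, is exactly the DPO term.

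The key observation is that $qa+(1-q)b=\mathbb{E}[Za+(1-Z)b\mid x,y_1,y_2]$. In other words, the soft term is the conditional expectation of the hard term given the unlabeled triple. Let $W:=-(Za+(1-Z)b)$ and $\bar W:=\mathbb{E}[W\mid x,y_1,y_2]$. Then $\ell_{\text{DPO}}=W$ and $\ell_{\text{DP3O}}=\alpha W+(1-\alpha)\bar W$. Applying the law of total variance,
\[
\mathbb{V}(W)=\mathbb{E}\,\mathbb{V}(W\mid x,y_1,y_2)+\mathbb{V}(\bar W).
\]
Since $\bar W$ is measurable with respect to the conditioning variables, the same decomposition gives
\[
\mathbb{V}(\alpha W+(1-\alpha)\bar W)=\alpha^2\,\mathbb{E}\,\mathbb{V}(W\mid x,y_1,y_2)+\mathbb{V}(\bar W).
\]
Because $\alpha\in[0,1]$, we have $\alpha^2\le 1$, so the single-sample DP3O variance is at most the DPO variance. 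This is a Rao--Blackwellization argument. Dividing both sides by $n$, which is valid by independence across samples, yields the lemma.

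The main obstacle is bookkeeping rather than analysis. The paper writes $p^*(x,y_l,y_w)$ in winner/loser coordinates, and the ordering itself is random. I would first check that the DP3O summand, once rewritten in terms of the unordered pair, really splits into $\alpha$ times the DPO term plus $(1-\alpha)$ times a label-free term. That check amounts to the identity $p^*_\alpha{}^{\top}\log p_\pi=\alpha\log p^+_\pi+(1-\alpha)\,p^{*\top}\log p_\pi$ together with the ordering-invariance of $p^{*\top}\log p_\pi$.

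I also need finiteness of the variances, which is guaranteed because $p_\pi\in(0,1)$; I would state this as a standing assumption. Finally, I would note that equality holds when $\alpha=1$, or when labels are deterministic given the triple so that the conditional variance vanishes.
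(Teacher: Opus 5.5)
Your proposal is correct and is essentially the paper's argument. The paper likewise reduces to a single summand and treats the soft-label term as the conditional expectation $\mathbb{E}_{l\mid x,y_w,y_l}[\log p_\pi(l\mid x,y_w,y_l)]$ of the hard-label term, then applies conditional Jensen, $(\mathbb{E}[W\mid\cdot])^2\le\mathbb{E}[W^2\mid\cdot]$ with equal means, which is the same Rao--Blackwell/total-variance step you use. Your version is somewhat more careful: the paper's proof effectively takes $\alpha=0$ and glosses over the winner/loser ordering, whereas you cover general $\alpha\in[0,1]$ via $\alpha^2\le 1$ and make the unordered-pair bookkeeping explicit.
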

The result follows from a variance decomposition and Jensen's inequality, and proofs are deferred to Appendix~\ref{appendix theory}. The two variances become equal only in degenerate cases, such as when the preference probability is constant or takes values only in $\{0,1\}$. Since preferences in RLHF are typically stochastic, true preference probabilities provide more information than hard labels alone. Therefore, DP3O with access to true preferences generally has lower objective variance than DPO.

Since we don't know the ground truth human preference, we can only utilize the fitted preference model $\hat p$ as the proxy, and we show the generalization bound when using the proxy:
\begin{theorem}
For any policy $\pi\in \Pi$ and $\hat p \in \mathcal{P}$. Then for any $\delta \in(0,1)$, with probability at least $1-\delta$ over $\mathcal{D} \sim \mathbb{P}_{\mathcal{D}}:$
\begin{equation*}
\begin{aligned}
R(\pi) \leq \; & \hat{R}_{\text{DP3O}}(\pi ; \mathcal{D},\hat p)  + \mathcal{O}\!\left(
   \sqrt{\hat{\mathbb{V}}_{\mathcal{D}}(\pi,\hat p) 
   \cdot \frac{\log \tfrac{|\Pi|}{\delta}}{n}}+\frac{\log |\Pi|}{n} \right) 
 + \mathcal{O}\left(\left(1-\alpha\right)\mathbb{E}_{\mathcal{D}}\left\|\hat p-p^*\right\|_2\right).
\end{aligned}
\end{equation*}
\label{thm:gen_bound}
\end{theorem}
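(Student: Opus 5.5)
The proof splits the population risk into three pieces. First, define an auxiliary population risk for DP3O with the fitted model held fixed,
\[
R_\alpha(\pi;\hat p) := -\mathbb{E}_{\mathcal{D}}\big[\hat p_\alpha(x,y_l,y_w)^\top \log p_\pi(x,y_l,y_w)\big],
\]
so that $\hat R_{\text{DP3O}}(\pi;\mathcal D,\hat p)$ is its empirical analogue. Then write
\[
R(\pi) = \hat R_{\text{DP3O}}(\pi;\mathcal D,\hat p) + \big[R_\alpha(\pi;\hat p)-\hat R_{\text{DP3O}}(\pi;\mathcal D,\hat p)\big] + \big[R(\pi)-R_\alpha(\pi;\hat p)\big].
\]
The first bracket is a concentration (generalization) term. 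The second is a bias term caused by replacing the true labels with the smoothed proxy labels.

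\textbf{Concentration term.} Treat $\hat p$ as fixed, as the paper stipulates. The summands $\ell_i(\pi)=-\hat p_\alpha^{(i)\top}\log p_\pi^{(i)}$ are then i.i.d. To apply Bernstein's inequality, I need these losses to be bounded by a constant $B$. I will assume $p_\pi$ is bounded away from $0$ and $1$ uniformly over $\Pi$, in the same spirit as the condition $0<p<1$ in Theorem~\ref{thm:estimation_error}. Bernstein then gives, for each fixed $\pi$,
\[
R_\alpha - \hat R_{\text{DP3O}} \le \sqrt{2\mathbb{V}(\ell)\log(1/\delta)/n} + \tfrac{2B\log(1/\delta)}{3n}.
\]
A union bound over the finite class $\Pi$ replaces $\log(1/\delta)$ with $\log(|\Pi|/\delta)$. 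To express the bound through the empirical variance $\hat{\mathbb V}_{\mathcal D}(\pi,\hat p)$ instead of the true variance, I would use the empirical Bernstein inequality of Maurer and Pontil. This changes only the constants and absorbs the remainder into the $\mathcal O(\log|\Pi|/n)$ term. The point of keeping the variance term is that, by the preceding Lemma, the variance is smaller for soft labels than for the hard labels used in DPO.

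\textbf{Bias term.} Both $R$ and $R_\alpha$ are linear in the label vector, so their difference is
\[
R(\pi)-R_\alpha(\pi;\hat p) = -\mathbb{E}\big[(p^*-\hat p_\alpha)^\top\log p_\pi\big].
\]
Expanding, $p^*-\hat p_\alpha = (1-\alpha)(p^*-\hat p) + \alpha(p^*-e_1)$, where $e_1=[1,0]$ is the hard-label vector.

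The $\alpha$ piece is handled by how the data are generated. The winner is drawn from $p^*$, so on the ordered sample the pair $(y_w,y_l)$ occurs with the corresponding probability. Taking the expectation over the label randomness, $\mathbb E[e_1^\top\log p_\pi]$ equals $\mathbb E[p^{*\top}\log p_\pi]$. This is the same identity as \eqref{obj: dpo entropy}, and it makes the $\alpha$ piece vanish.

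Care is needed here about whether the expectation in $R$ is taken over ordered or unordered pairs. I will work with the unordered pair $(x,y_1,y_2)$, on which $p^*$ and $\hat p$ are defined symmetrically, so that the identity holds exactly.

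The remaining $(1-\alpha)$ piece is bounded by Cauchy--Schwarz using $\|\log p_\pi\|_2\le \sqrt2\,B$:
\[
\big|(1-\alpha)\,\mathbb E[(p^*-\hat p)^\top\log p_\pi]\big| \le (1-\alpha)\sqrt2\,B\,\mathbb E\|\hat p-p^*\|_2.
\]
This produces the last term of the bound.

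\textbf{Main obstacle.} I expect the main difficulty to be making the $\alpha$-cancellation rigorous, since it depends on the ordering convention. A second issue is that the boundedness constant $B$ has to be hidden inside the $\mathcal O(\cdot)$. If the hard-label identity turns out to hold only for unordered pairs, I would redefine $R$ on the symmetric representation before decomposing.
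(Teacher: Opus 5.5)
Your proposal is correct and follows essentially the same route as the paper: the same auxiliary risk $\mathbb{E}[\hat R_{\text{DP3O}}(\pi;\mathcal D,\hat p)]$, the Maurer--Pontil empirical Bernstein inequality (the paper cites their Corollary~5) with a union bound over $\Pi$ for the concentration piece, and Cauchy--Schwarz with a uniform bound on $\|\log p_\pi\|_2$ (the paper's $\tau$) for the label-mismatch piece.

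Your explicit split $p^*-\hat p_\alpha=(1-\alpha)(p^*-\hat p)+\alpha(p^*-e_1)$ is what actually produces the $(1-\alpha)$ factor. The paper's appendix proof leaves this step implicit when it identifies $R(\pi)$ with $\mathbb{E}[\hat R_{\text{DP3O}}(\pi;\mathcal D,p^*)]$. Your ordering worry is moot: $p^{*\top}\log p_\pi$ is invariant under swapping $y_w$ and $y_l$, so the hard-label identity already holds on ordered pairs.
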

The second term reflects the empirical variance of DP3O objective. As shown in Lemma \ref{lemma}, using a true soft target can reduce this variance, and this effect may generalize when employing an explicit preference model. The final term acts as a penalty, capturing the discrepancy between the explicit and true preference models. Note that this bound is shared by DPO and DP3O: DPO corresponds to the special case $\alpha = 1$, while DP3O is the general $\alpha \in [0, 1)$ case. Theorem~\ref{thm:gen_bound} characterizes a bias-variance tradeoff: DP3O reduces the variance term relative to DPO (Lemma~\ref{lemma}) at the cost of a teacher-discrepancy term that vanishes for DPO. Whether the variance reduction outweighs the introduced bias depends on the teacher quality and the data distribution; the gain is therefore problem-dependent rather than universal. In our experiments, the explicit preference model is sufficiently accurate; the observed improvement of DP3O over DPO is consistent with this favorable bias-variance regime.

\section{Experiments}
\label{sec: experiments}

\label{experiments}
\subsection{Controlled Experiments}
\label{contrialled exp}

We further train and evaluate DP3O in controlled experiments and compare DP3O against several baselines, including offline methods such as DPO \citep{rafailov2024direct}, its label-smoothing variants CDPO \citep{cdpo2023} and RDPO \citep{chowdhury2024provably}, and the reward-distillation method d-DPO \citep{fisch2024robust}, as well as iterative DPO. All methods are instantiated with Pythia 2.8B. We evaluate all methods using two metrics derived from the true reward function $r^\star$: win rate and reward gap $\Delta r$. Win rate measures the fraction of pairwise comparisons won by each method, while $\Delta r$ quantifies the difference in true rewards between the outputs of $\pi_\theta$ and $\pi_{\mathrm{sft}}$. Additional details are provided in Appendix~\ref{details controlled exp}.

\textbf{DP3O consistently outperforms offline baselines, matches iterative DPO, and is more computationally efficient. }
As shown in Table~\ref{table:control_results_table}, DP3O consistently outperforms all offline baselines across the controlled experiments. This supports our hypothesis that informative soft targets provide a stronger training signal than either hard labels alone or constant soft labels, as used in label-smoothing-based methods. In particular, the gains over CDPO and RDPO suggest that the benefit comes not merely from smoothing the preference labels, but from distilling a more informative preference distribution. Moreover, DP3O also outperforms d-DPO, indicating that directly distilling preference probabilities can be more effective than distilling reward signals. We further compare DPO and DP3O in terms of preference prediction accuracy in Appendix Fig.~\ref{fig:policy_acc}. DP3O consistently achieves higher accuracy than DPO when used as an implicit preference model. This suggests that DP3O provides a better approximation to the underlying preference structure, which may in turn improve generalization and generation quality. We also observe that DP3O achieves performance comparable to iterative DPO at the final iteration. However, iterative DPO is substantially more expensive ($2.67$ vs.\ $1.55$ hours) due to the overhead of data generation and additional annotation. Taken together, these results suggest that DP3O offers a favorable trade-off between performance and efficiency.

\begin{table}[!t]
\centering
\caption{Controlled experiments: comparison of DP3O and baseline methods in terms of win rate, reward gain $\Delta r$ relative to SFT, and training time. \textbf{Bold} indicates the best results among offline methods.}
\small
\setlength{\tabcolsep}{8.0pt} 
\begin{tabular}{lcccccc}
\toprule
 \multirow{2}{*}{\textbf{Type}} & \multirow{2}{*}{\textbf{Method}} 
& \multicolumn{3}{c}{ Win Rate \% ($\uparrow$) }
& \multirow{2}{*}{$\Delta r$ ($\uparrow$)}
& \multirow{2}{*}{Training Time (h) ($\downarrow$)} \\
\cmidrule(lr){3-5}
& & {Base} & { Chosen} & { Average} & & \\
\midrule

\multirow{6}{*}{Offline}
 & DPO               & $78.62$ &$47.90$ &$63.26$ &$1.22$ & $0.92$ \\
 & CDPO              & $71.18$ &$38.16$ &$54.67$ &$0.87$ & $0.91$  \\
 & RDPO              & $73.52$ &$43.07$ &$58.30$ &$1.05$ & $0.91$ \\
 & d-DPO         & $79.51$ &$49.66$ &$64.59$ &$1.33$ & $1.54$ \\
\cmidrule(lr){2-7}
 & DP3O    & $\mathbf{82.65}$ & $\mathbf{53.24}$ &$\mathbf{67.95}$ &$\mathbf{1.39}$ &$1.55$  \\

\midrule

\multirow{3}{*}{Iterative}
 & Iterative DPO it-1 & $53.04$ &$29.45$ &$41.24$ &$0.16$ & - \\
 & Iterative DPO it-2 & $78.01$ &$45.66$ &$61.84$ &$1.20$ & - \\
 & Iterative DPO it-3 & $84.30$ &$61.09$ &$72.69$ &$1.71$ &  $2.67$ \\

\bottomrule
\end{tabular}
\label{table:control_results_table}
\vspace{-1mm}
\end{table}

\begin{figure}[!t]
    \centering

    \begin{subfigure}{0.45\textwidth}
        \centering
        \includegraphics[width=\linewidth]{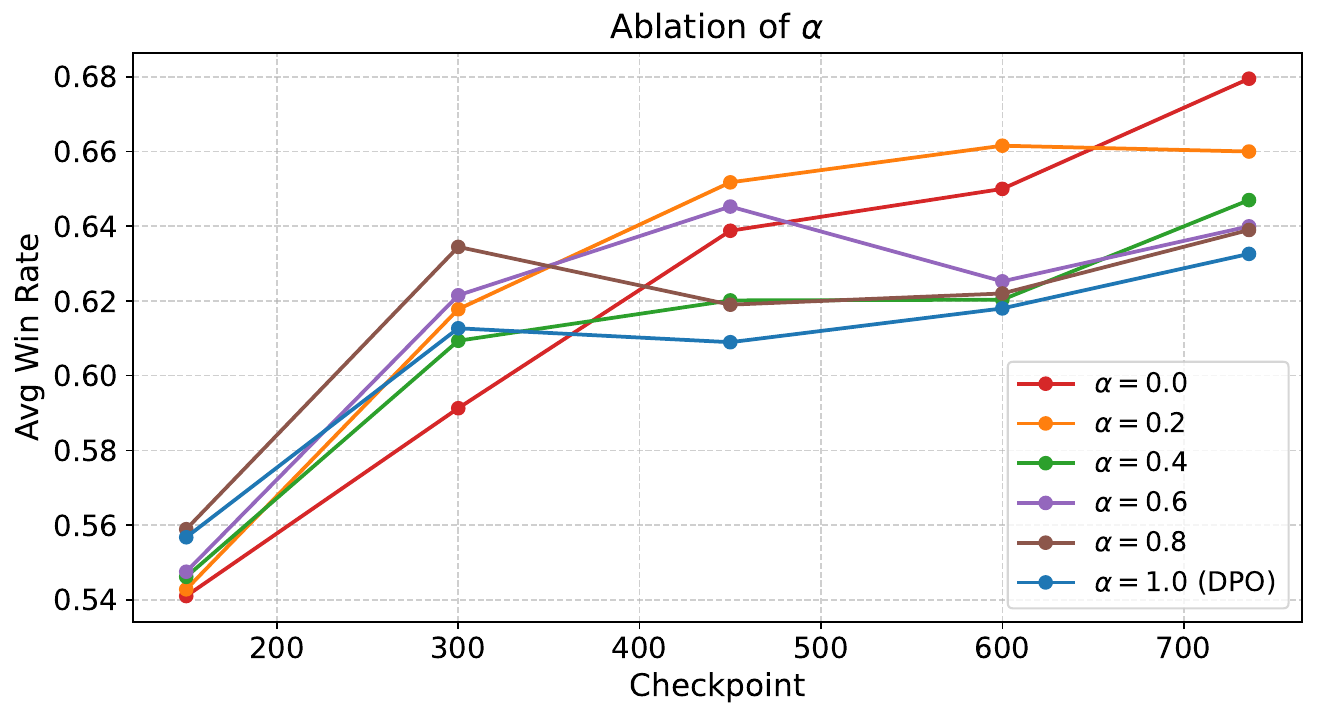}
    \end{subfigure}
    \hfill
    \begin{subfigure}{0.45\textwidth}
        \centering
        \includegraphics[width=\linewidth]{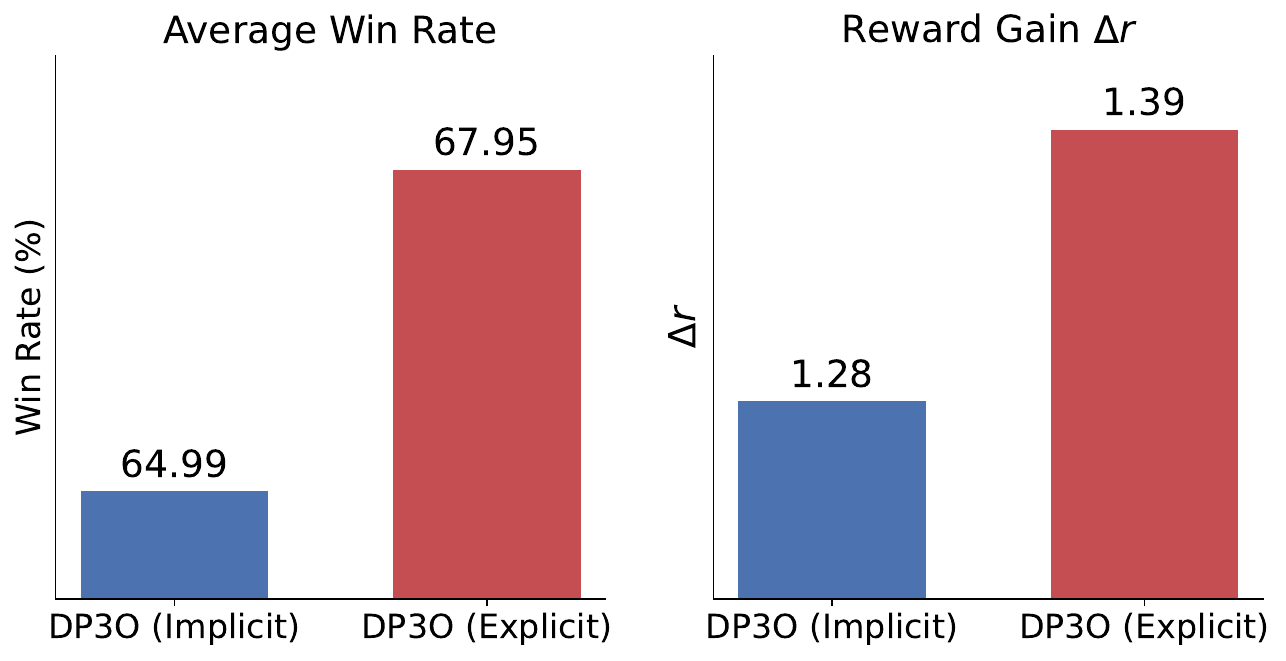}
    \end{subfigure}

\caption{Ablation on $\alpha$ and teacher model type. \textbf{Left}: Average win rate of DP3O under different values of $\alpha$, using the explicit preference model as the default teacher. \textbf{Right}: Comparison of DP3O with two types of teacher models, evaluated by average win rate and reward gain $\Delta r$.}
    \label{fig:ablation}
    \vspace{-4mm}
\end{figure}

\textbf{Effects of $\alpha$ and preference model type on alignment performance.}
We further study how the imitation parameter $\alpha$ and the type of teacher model affect alignment performance. Recall from (\ref{obj: dp3o}) that a smaller $\alpha$ places more weight on the soft preference probabilities provided by the teacher model, while $\alpha=1$ reduces DP3O to standard DPO. As shown in the left panel of Figure~\ref{fig:ablation}, all main results are obtained using an explicit preference model as the default teacher. Under this setting, all choices with $\alpha \neq 1$ outperform $\alpha=1$, indicating that soft preference information is more effective than hard labels alone. In the right panel of Figure~\ref{fig:ablation}, we replace the explicit teacher with an implicit one while fixing $\alpha=0$ for both cases, and observe worse performance on both average win rate and reward gain $\Delta r$. This suggests that the effectiveness of DP3O depends not only on the use of soft labels, but also on the type of preference model that provides them: explicit preference models offer more effective supervision than implicit ones.

\textbf{DP3O consistently outperforms all baselines under label noise. } In addition to the noise-free setting, we also consider label-noise settings. Specifically, we adopt the contamination model
$
p^\epsilon(y_w \succ y_l \mid x)
=
(1-\epsilon)\, p^\star(y_w \succ y_l \mid x)
+
\epsilon\, p^\star(y_l \succ y_w \mid x),
$
where $\epsilon$ denotes the noise ratio. We consider two noise levels, $\epsilon = 0.3$ and $\epsilon = 0.45$, with results reported in Table \ref{tab:noise_res}. As noise increases, the explicit preference model becomes less accurate, which can degrade iterative DPO because it depends on hard labels from the learned preference model. Offline methods are more robust in comparison. Among them, DP3O performs best: even when the explicit teacher is noisy, its soft probabilities still convey useful uncertainty information, making DP3O more robust than methods based only on hard labels or fixed label smoothing.

In these controlled experiments, we focus on the correctly specified setting, where both the true preference function and the explicit preference model follow the Bradley--Terry form. We consider the misspecified case, where the true preference function is not a BT model, as shown in Appendix~\ref{appendix:misspecified}.
\begin{table}[t]
\centering
\caption{Results on noisy preference data with label-flipping noise ratios of $0.3$ and $0.45$. \textbf{Bold} indicates the best results among all methods.}
\small
\setlength{\tabcolsep}{4pt} 
\begin{tabular}{l l cccc cccc}
\toprule
\multirow{3}{*}{\textbf{Type}} & \multirow{3}{*}{\textbf{Method}} 
& \multicolumn{4}{c}{$\epsilon = 0.3$}
& \multicolumn{4}{c}{$\epsilon = 0.45$} \\
\cmidrule(lr){3-6} \cmidrule(lr){7-10}

& 
& \multicolumn{3}{c}{Win Rate \% ($\uparrow$)} & $\Delta r$ ($\uparrow$)
& \multicolumn{3}{c}{Win Rate \% ($\uparrow$)} & $\Delta r$ ($\uparrow$) \\
\cmidrule(lr){3-5}\cmidrule(lr){7-9}

& 
& Base & Chosen & Average 
& 
& Base & Chosen & Average & \\
\midrule

\multirow{3}{*}{Offline}
& DPO            
& $65.96$ & $32.30$ & $49.13$ & $0.66$ 
& $52.10$ & $23.13$ & $37.62$ & $0.14$ \\

& CDPO            
& $58.26$ & $26.31$ & $42.29$ & $0.35$ 
& $51.90$ & $23.16$ & $37.53$ & $0.11$ \\

& RDPO            
& $71.30$ & $38.71$ & $55.01$ & $0.89$ 
& $51.67$ & $22.42$ & $37.05$ & $0.11$ \\

& d-DPO      
& $68.37$ & $42.14$ & $55.26$ & $0.91$ 
& $52.39$ & $23.27$ & $37.83$ & $0.16$ \\

\cmidrule(lr){2-10}

& DP3O           
& $\mathbf{71.42}$ & $\mathbf{47.78}$ & $\mathbf{59.60}$ & $\mathbf{1.13}$
& $\mathbf{54.22}$ & $\mathbf{25.56}$ & $\mathbf{39.89}$ & $\mathbf{0.21}$ \\

\midrule

\multirow{3}{*}{Iterative}
& Iterative DPO it-1  
& $53.42$ & $28.27$ & $40.85$ & $0.30$ 
& $24.42$ & $14.25$ & $19.34$ & $-1.21$ \\

& Iterative DPO it-2  
& $57.56$ & $30.74$ & $44.15$ & $0.39$ 
& $18.65$ & $6.95$  & $12.80$ & $-1.51$ \\

& Iterative DPO it-3
& $67.21$ & $41.58$ & $54.40$ & $0.82$ 
& $16.76$ & $5.89$  & $11.33$ & $-1.50$ \\

\bottomrule
\end{tabular}
\label{tab:noise_res}
\end{table}

\subsection{Scaling-up Experiments}

\begin{table*}[!t]
\centering
\caption{AlpacaEval 2~\citep{alpaca_eval} and MT-Bench~\citep{zheng2023judging} results of two models. \textbf{Bold} indicates the best results among all methods.}
\small
\setlength{\tabcolsep}{4pt} 
\begin{tabular}{lcccccc}
\toprule
\multirow{3}{*}{\textbf{Method}} 
& \multicolumn{3}{c}{\textbf{Mistral-7B}} 
& \multicolumn{3}{c}{\textbf{Llama-3-8B}} \\

\cmidrule(lr){2-4}\cmidrule(lr){5-7}
& \multicolumn{2}{c}{\textbf{AlpacaEval 2}} & \textbf{MT-Bench}
& \multicolumn{2}{c}{\textbf{AlpacaEval 2}} & \textbf{MT-Bench} \\

\cmidrule(lr){2-3}\cmidrule(lr){5-6}
& {\scriptsize \bf LC Win Rate (\%)} 
& {\scriptsize \bf Win Rate (\%)} 
& {\scriptsize \bf GPT-4 Turbo}
& {\scriptsize \bf LC Win Rate (\%)} 
& {\scriptsize \bf Win Rate (\%)} 
& {\scriptsize \bf GPT-4 Turbo} \\

\midrule
SFT      & 8.4  & 6.2  & 4.8 & 6.2  & 4.6  & 5.2 \\
SLiC-HF  & 10.9 & 8.9  & 5.8 & 12.3 & 13.7 & 6.3 \\
DPO      & 15.1 & 12.5 & $\mathbf{5.9}$ & 18.2 & $\mathbf{15.5}$ & $\mathbf{6.5}$ \\
IPO      & 11.8 & 9.4  & 5.5 & 14.4 & 14.2 & $\mathbf{6.5}$ \\
CPO      & 9.8  & 8.9  & 5.4 & 10.8 & 8.1  & 6.0 \\
d-DPO& 16.6 & 12.3 & 5.5 & 17.3 & 14.5 & 6.4 \\

\midrule
DP3O     & $\mathbf{17.6}$ & $\mathbf{13.1}$ & $\mathbf{5.9}$ 
         & $\mathbf{19.3}$ & 15.3 & $\mathbf{6.5}$ \\
\bottomrule
\end{tabular}
\label{tab:main_res_ultra}
\end{table*}

\begin{table*}[!ht]
    \caption{Results on the open LLM leaderboard. \textbf{Bold} indicates the best results among all methods.}
    \small
    \setlength{\tabcolsep}{4pt} 
\begin{tabular}{@{}llccccccc@{}}
    \toprule
    \textbf{Model} & \textbf{Method} & \textbf{MMLU} & \textbf{ARC} & \textbf{HellaSwag} & \textbf{TruthfulQA} & \textbf{Winograd} & \textbf{GSM8K} & \textbf{Average} \\ \midrule

    \multirow{7}{*}{\textbf{Mistral 7b}} 
&SFT  & $60.10$ & $58.28$ & $80.76$ & $40.35$ & $76.40$ & $28.13$ & $57.34$ \\
& SLiC-HF &  $59.24$ & $55.38$ & $81.15$ & $48.36$ & $77.35$ & $33.74$ & $59.20$ \\
& DPO  & $58.48$ & $61.26$ & $83.59$ & $53.06$ & $76.80$ & $21.76$ & $59.16$ \\
& IPO   & $60.23$ & $60.84$ & $83.30$ & $45.44$ & $77.58$ & $27.14$ & $59.09$ \\
& CPO & $59.39$ & $57.00$ & $80.75$ & $47.07$ & $76.48$ & $33.06$ & $58.96$ \\
&d-DPO & $58.14$ & $59.22$ & $83.71$ & $49.23$ & $76.13$ & $24.24$ & $58.45$ \\
\cmidrule(lr){2-9}
& DP3O & $58.13$ & $63.16$ & $84.76$ & $54.23$ & $77.59$ & $23.53$ & $\mathbf{60.23}$ \\ \midrule

    \multirow{7}{*}{\textbf{Llama-3-8B}} 
& SFT & $64.88$ & $60.15$ & $81.37$ & $45.33$ & $75.77$ & $46.32$ & $62.30$ \\
& SLiC-HF & $64.36$ & $61.43$ & $81.88$ & $54.95$ & $77.27$ & $48.82$ & $64.79$ \\
& DPO   & $64.31$ & $64.42$ & $83.87$ & $53.48$ & $76.32$ & $38.67$ & $63.51$ \\
& IPO   & $64.40$ & $62.88$ & $80.46$ & $54.20$ & $72.22$ & $22.67$ & $59.47$ \\
& CPO & $64.98$ & $61.69$ & $82.03$ & $54.29$ & $76.16$ & $46.93$ & $64.35$ \\
&d-DPO & $63.38$ & $65.42$ & $81.78$ & $55.95$ & $77.12$ & $41.76$ & $64.24$ \\
\cmidrule(lr){2-9}
& DP3O & $63.18$ & $66.30$ & $84.01$ & $57.42$ & $77.43$ & $43.29$ & $\mathbf{65.27}$ \\

    \bottomrule
    \end{tabular}
    \label{tab:downstream}
\vspace{1mm}
\end{table*}

\textbf{Setup}. We first fine-tuned  with two families of models,
Mistral-7B \citep{jiang2023mistral} and Llama-3-8B \citep{dubey2024llama}, on the UltraChat-200k dataset \citep{ding2023enhancing} to obtain SFT models. Subsequently, we use this SFT model as the starting point for preference optimization and also proxy explicit preference model fitting on the UltraFeedback dataset \citep{cui2023ultrafeedback}. 

\textbf{Baselines}.  In addition to SFT baselines, we compare our method with a range of comprehensive offline alignment approaches: SLiC-HF \citep{zhao2023slic}, DPO \citep{rafailov2024direct}, IPO \citep{azar2024general}, CPO \citep{xu2024contrastive} and d-DPO \citep{fisch2024robust}.

\textbf{Evaluation. } We evaluate our models using two widely recognized benchmarks: AlpacaEval 2 \citep{alpaca_eval,dubois2024length}, MT-Bench \citep{zheng2023judging} and Open LLM Leaderboard \citep{open-llm-leaderboard-v1}, which assess a diverse range of conversational abilities across various query types. For AlpacaEval 2, we report both raw win rates and length-controlled (LC) win rates. MT-Bench scores are evaluated with GPT-4-Preview-1106, as it provides more accurate reference answers and judgments compared to GPT-4.

 \begin{wrapfigure}{r}{0.36\textwidth}
    \vspace{-8pt}
    \centering
    \includegraphics[width=1\linewidth]{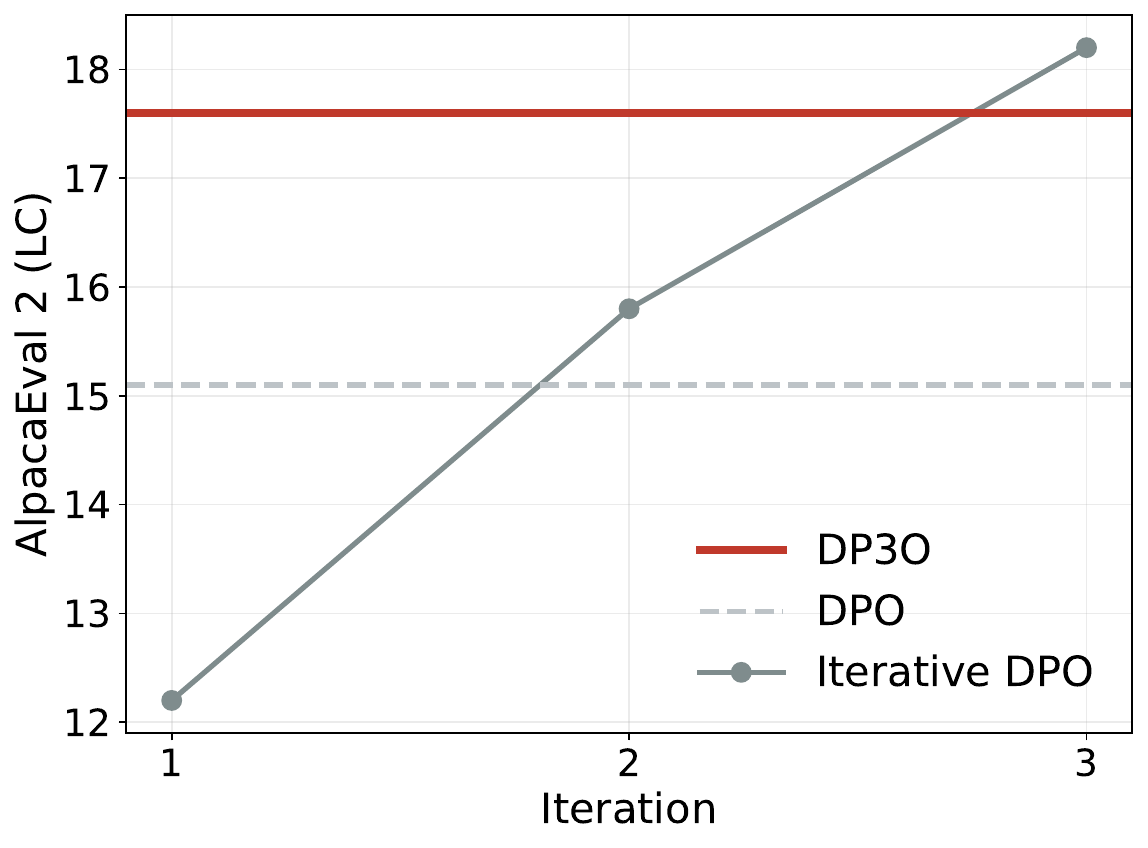}
    \caption{Performance of DP3O, DPO, and iterative DPO on Mistral-7B.}
    \label{fig:scaling-dp3o-vs-iterative}
\end{wrapfigure}
\textbf{Results. }Table \ref{tab:main_res_ultra} summarizes the results for Mistral-7B and Llama-3-8B in an offline setup, and here we focus on Mistral-7B. Baseline results are sourced from \citep{meng2024simpo}. Our proposed method, DP3O, achieves the best or tied-best on $5/6$ metrics and remains highly competitive on the remaining one, demonstrating both its effectiveness and robustness for alignment. In particular, for Mistral-7B, DP3O improves the AlpacaEval 2 LC score by $2.5\%$ over the strongest baseline. DP3O also consistently outperforms all other methods across the six downstream tasks listed in Table \ref{tab:downstream}, yielding an average improvement of $2.93\%$ over the base SFT model across two model families. Taken together, these results show that DP3O not only improves alignment performance, but also improves downstream capabilities, making it a strong and practical approach for LLM alignment across diverse models and tasks.

We also compare DP3O with iterative DPO. As shown in Fig.~\ref{fig:scaling-dp3o-vs-iterative}, DP3O achieves performance comparable to iterative DPO while requiring only about $50\%$ of the computational time. This result suggests that DP3O captures much of the benefit of iterative preference optimization without incurring its full training cost. 

These results highlight DP3O as a performance--efficiency trade-off rather than a method that uniformly dominates iterative alignment. Specifically, DP3O serves as a middle ground between standard offline DPO and iterative DPO: it improves upon DPO through explicit preference modeling while retaining the efficiency and simplicity of offline training. Thus, DP3O narrows the offline--iterative alignment gap, although iterative methods may still achieve stronger task performance in some settings.

\section{Conclusion}
\label{sec: conclusion}
In this work, we investigate the performance gap between offline and iterative alignment approaches by analyzing the role of the preference model in driving their effectiveness. Our controlled experiments demonstrate that the explicit preference model is critical to the performance gap. Motivated by this observation, we propose DP3O, an effective and efficient offline alignment method that leverages knowledge from an explicit preference model while preserving the simplicity of DPO. 
We evaluate DP3O on a broad set of benchmarks and show that it outperforms state-of-the-art offline methods while narrowing the gap to iterative methods with substantially lower computational cost.

Our findings highlight the potential of offline methods that effectively integrate preference information, offering a promising direction for efficient and scalable alignment of large language models. One limitation of our method is its reliance on training an additional preference model. Future work could address this overhead by exploring lightweight reward models, such as embedding-based approaches \citep{sun2025reusing} or low-rank adapter (LoRA) methods \citep{zhai2023uncertainty} to enable more efficient alignment. Another promising direction is to replace the global interpolation weight $\alpha$ with a sample-adaptive weight $\alpha_i$, relying less on the explicit preference model on instances where it is unreliable.

\section*{Acknowledgements}

The authors thank the Action Editor and anonymous reviewers for their constructive and insightful feedback. This work was supported by the National Science Foundation under grant DMS  No. 2401271.

\newpage

\bibliography{tmlr}
\bibliographystyle{tmlr}

\newpage
\appendix

\section{Additional Results}
\subsection{Details about Controlled Experiments}
\label{details controlled exp}

\begin{figure}[!h]
    \centering
    \includegraphics[width=0.7\linewidth]{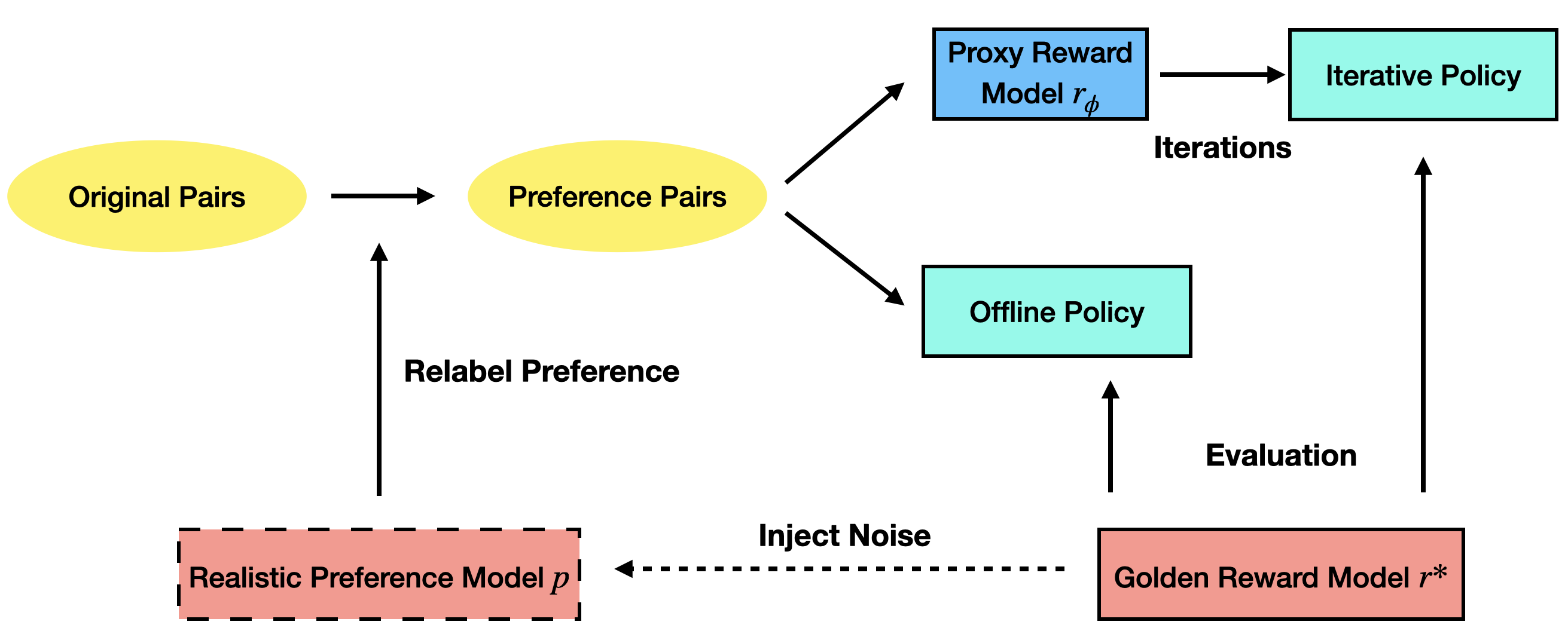}
    \caption{The overall framework for controlled experiments comparing iterative and offline approaches.}
    \label{fig:framework}
    \vspace{-0.3cm}
\end{figure}

\textbf{Algorithms.}
We compare DP3O with a diverse set of alignment baselines, including several representative \emph{offline} methods and one \emph{iterative} method. Specifically, the offline baselines include DPO \citep{rafailov2024direct}, its label-smoothing variants CDPO \citep{cdpo2023} and RDPO \citep{chowdhury2024provably}, and the reward-distillation method d-DPO \citep{fisch2024robust}. We also include iterative DPO as a strong iterative alignment baseline.

For a fair comparison, all policy models are instantiated with Pythia-2.8B. For CDPO and RDPO, we set the label-smoothing parameter to $0.3$, which is intended to reflect the stochasticity of the preference labels in the dataset. For d-DPO, we first train an explicit preference model on the pairwise preference data, and then use its output logits as training targets for the implicit reward model, optimized with a mean squared error objective. For iterative DPO, we run $3$ iterations in total. At each iteration, we generate $8$ candidate responses per prompt and use one-third of the prompts, so that the total prompt budget is matched across iterations.

\textbf{Data.}
We conduct experiments on the Anthropic Helpful and Harmless (HH) dialogue dataset \citep{bai2022training}. This dataset consists of prompts paired with two model-generated responses, together with a human preference label indicating which response is preferred. Following common practice, we retain only the single-turn dialogue instances to reduce confounding factors introduced by longer multi-turn interactions. The resulting dataset is split into training, validation, and test sets containing 4.7k, 1k, and 2.5k prompts, respectively.

\textbf{Models.}
We use Pythia-2.8B \citep{biderman2023pythia} as the policy model in all experiments. We first perform supervised fine-tuning on the preferred completions in the training set to obtain the reference policy $\pi_{\text{ref}}$, which serves as the initialization and reference model for subsequent preference optimization.

For the gold reward model $r^\star$, we train a Mistral-7B-Instruct-v0.2 model \citep{jiang2023mistral} on the corresponding preference pairs. This reward model is used only for evaluation and serves as an external judge for comparing generations from different policies.

For the explicit preference model, we adopt the same backbone architecture as the policy model, i.e., Pythia-2.8B. To convert the language model into an explicit preference model that outputs a scalar logit, we replace the final language modeling head with a linear regression head that predicts a single scalar reward score for each response, following prior work on reward modeling and preference learning \citep{dong2024rlhf}. The resulting model is trained on pairwise preference data and is later used to produce the soft preference probabilities required by DP3O.

\textbf{Hyperparameters.}
For policy optimization, we use the same hyperparameter setting across all methods unless otherwise specified. In particular, we set $\beta = 0.05$, the learning rate to $1 \times 10^{-6}$, and the batch size to $64$. All policy optimization experiments are run for a single epoch.

For training the explicit preference model, we use a batch size of $128$ and a learning rate of $1 \times 10^{-5}$. For the gold reward model, we use a batch size of $256$ and a learning rate of $5 \times 10^{-6}$. All reward and preference models are also trained for a single epoch.

For CDPO and RDPO, the label-smoothing factor is set to $0.3$. This value is chosen because it approximately matches the underlying label-flip rate induced by the ground-truth preference distribution in our synthetic setup, and thus provides these methods with a well-calibrated smoothing level.

\textbf{Evaluation.}
We evaluate all models using two metrics derived from the gold reward model $r^\star$: \emph{win rate} and \emph{reward improvement} ($\Delta$). The win rate measures the fraction of test prompts for which the response generated by the current policy $\pi_\theta$ receives a higher gold reward than the response generated by the reference policy $\pi_{\text{ref}}$. The reward improvement metric measures the average difference in gold reward between these two responses.

Formally, the metrics are defined as
\begin{align*}
\text{Win Rate} 
&= \frac{1}{|\mathcal{X}_{\text{test}}|} 
\sum_{x \in \mathcal{X}_{\text{test}}} 
\mathbbm{1}\!\left[r^\star(x, y_{\pi_\theta}) > r^\star(x, y_{\pi_{\text{ref}}})\right],\\
\text{Reward } \Delta 
&= \frac{1}{|\mathcal{X}_{\text{test}}|} 
\sum_{x \in \mathcal{X}_{\text{test}}} 
\left[r^\star(x, y_{\pi_\theta}) - r^\star(x, y_{\pi_{\text{ref}}})\right],
\end{align*}
where $\mathcal{X}_{\text{test}}$ denotes the held-out test prompt set, and $y_{\pi_\theta}$ and $y_{\pi_{\text{ref}}}$ are the responses sampled from $\pi_\theta$ and $\pi_{\text{ref}}$, respectively. During evaluation, we generate responses using stochastic decoding with temperature $0.7$.

\textbf{Compute.}
All training and inference experiments are conducted on four NVIDIA A6000 GPUs with 40GB memory each.

\subsection{Details about Scaling Experiments}
\label{sec:details_scaling}

\noindent \textbf{Baselines.}
We compare DP3O against several strong preference optimization baselines, including SLiC-HF \citep{zhao2023slic}, DPO \citep{rafailov2024direct}, IPO \citep{azar2024general}, and CPO \citep{xu2024contrastive}. For a fair comparison, we only include baselines that do not rely on length normalization tricks. Extending DP3O with such techniques is left for future work.

\textbf{Hyperparameters.}
For Mistral models, we set $\beta=0.01$ and use a learning rate of $5\times10^{-7}$. For Llama-3-8B models, we set $\beta=0.01$ and use a learning rate of $6\times10^{-7}$. The batch size for policy optimization is $32$, and all models are trained for one epoch. For fitting the explicit preference model, we use a learning rate of $5\times10^{-6}$ and a batch size of $256$.

\textbf{Evaluation.}
We evaluate our models on AlpacaEval 2, MT-Bench, and the Open LLM Leaderboard.

For AlpacaEval 2, we follow the standard evaluation protocol and report the length-controlled win rate. During evaluation, we use sampling-based decoding, with temperature $0.7$ for Mistral-7B and $0.9$ for Llama-3-8B.

For MT-Bench, we follow the official setting and report the average score over all questions. We use the benchmark’s default category-specific decoding temperatures.

For the Open LLM Leaderboard, we report the average score across six standard tasks: ARC, HellaSwag, MMLU, TruthfulQA, Winogrande, and GSM8K, which together evaluate science question answering, commonsense reasoning, multitask knowledge, truthfulness, and mathematical reasoning, following prior work.

\textbf{Compute.}
All training and inference experiments are conducted on four NVIDIA A6000 GPUs with 40GB memory each.

\begin{figure}[!htbp]
    \centering
    \includegraphics[width=0.4\linewidth]{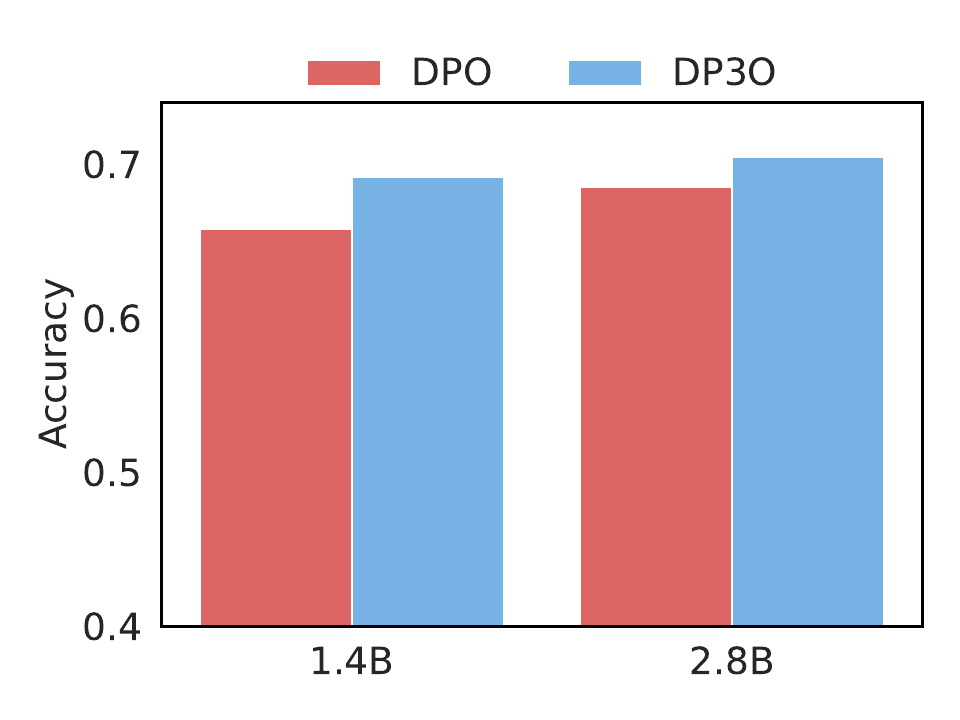}
    \vspace{-0.2cm}
    \caption{Accuracy of implicit models.}
    \label{fig:policy_acc}
\end{figure}

\begin{table}[h]
\centering
\caption{Comparison of the explicit preference model and LLM-as-judge baselines.}
\begin{tabular}{lccc}
\toprule
\textbf{Method} & \textbf{ACC (\%) ($\uparrow$)} & \textbf{ECE} ($\downarrow$) & \textbf{Win rate} ($\uparrow$) \\
\midrule
Explicit                & $\mathbf{82.4}$ & $\mathbf{0.24}$ & $\mathbf{77.1}$ \\
LLM-judge (prompt)      & $52.2$ & $0.46$ & $70.4$ \\
LLM-judge (trained)     & $68.3$ & $0.32$ & $74.8$ \\
\bottomrule
\end{tabular}
\label{tab:reward_comparison}
\end{table}

\subsection{Misspecified Explicit Preference model}
\label{appendix:misspecified}
DP3O relies on an explicit preference model to provide soft preference probabilities for distillation. Ideally, this model should match the underlying true preference function. In practice, however, this assumption may not hold. For example, the true preference function may be induced by an LLM-as-judge, while the explicit preference model used by DP3O is trained separately on offline preference data. A natural question is therefore whether the explicit preference model remains useful even when it is misspecified relative to the true preference function.

To study this setting, we conduct an experiment in which the ground-truth preference function is defined by an LLM-as-judge, specifically Mistral-$7$B-SFT-beta. We then compare our explicit preference model with two LLM-as-judge alternatives. The first is a \emph{prompt-based} judge, where an instruction-tuned model is prompted to choose between two responses, and the preference probability is estimated from the token probabilities of ``A'' and ``B''. The second is a \emph{trained} judge, where the same backbone is further supervised fine-tuned for pairwise judgment. We use Gemma-$2$B-it as the backbone for judge baselines, since it is already instruction-tuned and better suited for following judgment prompts. We use $\alpha=0$ for DP3O.

We evaluate these models using three metrics. \emph{Accuracy (ACC)} measures how often the model correctly predicts the preferred response. \emph{Expected calibration error (ECE)} measures the discrepancy between predicted preference probabilities and empirical correctness, with lower values indicating better calibration. Finally, \emph{win rate} measures downstream policy performance when DP3O uses each model as the preference model, reporting how often the resulting policy is preferred over the reference policy under the evaluation setup.

The results are shown in Table~\ref{tab:reward_comparison}. Even though the explicit preference model is misspecified in this setting, it still achieves the best overall trade-off between accuracy and calibration, and DP3O built on top of it attains the highest win rate. By contrast, the prompt-based LLM-as-judge performs poorly in both accuracy and calibration, while supervised fine-tuning improves the LLM-as-judge baseline but still leaves it substantially less calibrated than the explicit model.

These results suggest that an explicit preference model can remain highly effective for DP3O even when it does not exactly match the true preference function. What matters is not only whether the model is misspecified, but whether it can still provide accurate and well-calibrated soft preference probabilities. In our setting, the explicit preference model appears to be a more reliable source of such signals than LLM-as-judge alternatives, leading to stronger downstream policy optimization.

\subsection{Why the Implicit Class $\Pi$ is Richer than the Explicit Class $\mathcal{P}$?}
\label{appendix:capacity}

Theorem~\ref{thm:estimation_error} relies on the size of the hypothesis class. Since analytically comparing two deep hypothesis classes that share a backbone is intractable, we provide two complementary lines of evidence that the implicit family parameterizes a strictly larger set of preference functions: (i) a \emph{structural} argument showing that the implicit read-out carries far more parameters and a far higher-dimensional output space than the explicit read-out, and (ii) an \emph{empirical} argument, in the spirit of \citet{zhang2017understanding}, showing that this larger parameterization translates into substantially greater expressive power, i.e., the ability to memorize arbitrary random preference labels that the explicit family cannot fit.

\paragraph{Different parameters in the output layer.}
With a shared backbone of hidden size $d$ and vocabulary size $V$, the two families differ only at the output layer, and the implicit family has strictly more parameters there:
\begin{itemize}
    \item \textbf{Explicit model:} a linear reward head $\mathbb{R}^d \!\to\! \mathbb{R}$ with $d$ parameters, collapsing every response to a single scalar $r_\phi(x,y) \in \mathbb{R}$.
    \item \textbf{Implicit model:} the LM head $\mathbb{R}^d \!\to\! \mathbb{R}^V$ with $d\!\times\!V$ parameters, and the reward is realized as a sum of per-token log-ratios
    \begin{equation}
        r_\theta(x,y) \;=\; \beta \sum_{t=1}^{T} \log\frac{\pi_\theta(y_t\mid x, y_{<t})}{\pi_{\text{ref}}(y_t\mid x, y_{<t})}.
    \end{equation}
\end{itemize}

Instantiating these counts for Pythia-1.4B ($d\!=\!2048$, $V\!=\!50304$, with $T$ typically in the hundreds), the explicit reward head carries about $\mathbf{2K}$ parameters and produces a $1$-dim output per response, while the implicit LM-head read-out carries about $\mathbf{103M}$ parameters and produces an effective $V\!\times\!T$-dim output per response on the order of tens of millions. The two output heads thus differ by roughly five orders of magnitude in parameter count and seven orders of magnitude in effective output dimensionality, even though the upstream backbone is identical. The extra $V\!\times\!d$ parameters at the read-out are not just a parameter-count difference: they let the implicit model assign an independent log-ratio to every (token, position) pair, so the response-level reward is a sum of $T$ unbounded log-ratios, versus a single bounded scalar in the explicit case. The implicit parameterization therefore indexes a strictly larger family of preference functions.

\paragraph{Empirical capacity measurement via the randomization test.}
The structural argument bounds the family from above; we next show that the implicit family can realize this extra flexibility on data. Following the randomization-test protocol of \citet{zhang2017understanding}, a more flexible parameterization should be able to fit signal-free random preference labels more easily, even when the backbone, optimizer, and regularization are held identical. This isolates the contribution of the output parameterization from the shared backbone.

We vary \emph{only} the output parameterization, holding everything else fixed. We start from the Anthropic HH-RLHF preference dataset. To remove learnable preference signals from training, we randomly assign the preference label for each training pair independently of the original annotation; that is, for each pair, either response is selected as the winner with probability $1/2$. The validation set is left intact with the original labels, so that the evaluation signal remains a clean probe of whether the model has learned transferable preferences or simply fit the randomized training labels. Both families are initialized from the same Pythia-1.4B SFT checkpoint. The only varied factor is the output parameterization: the explicit model optimizes $\mathcal{L}_{\text{Explicit}} = -\log\sigma\bigl(\beta\,(r_\phi(x,y_w) - r_\phi(x,y_l))\bigr)$, where $r_\phi$ is a linear scalar head on the final hidden state, swept over $\beta \in \{0.05, 0.1, 0.2, 1.0\}$; the implicit DPO model optimizes $\mathcal{L}_{\text{DPO}} = -\log\sigma\bigl(\beta(\log\tfrac{\pi_\theta(y_w\mid x)}{\pi_{\text{ref}}(y_w\mid x)} - \log\tfrac{\pi_\theta(y_l\mid x)}{\pi_{\text{ref}}(y_l\mid x)})\bigr)$ with $\beta \in \{0.05, 0.1, 0.2\}$. We almost match $\beta$ across the two families so that any difference cannot be attributed to logit-scale rescaling. All runs use the same optimizer, learning rate $1\times10^{-5}$ and cosine schedule with warmup ratio $0.1$, trained for $3$ epochs at batch size $64$.

\begin{figure}[t]
    \centering
    \includegraphics[width=0.95\linewidth]{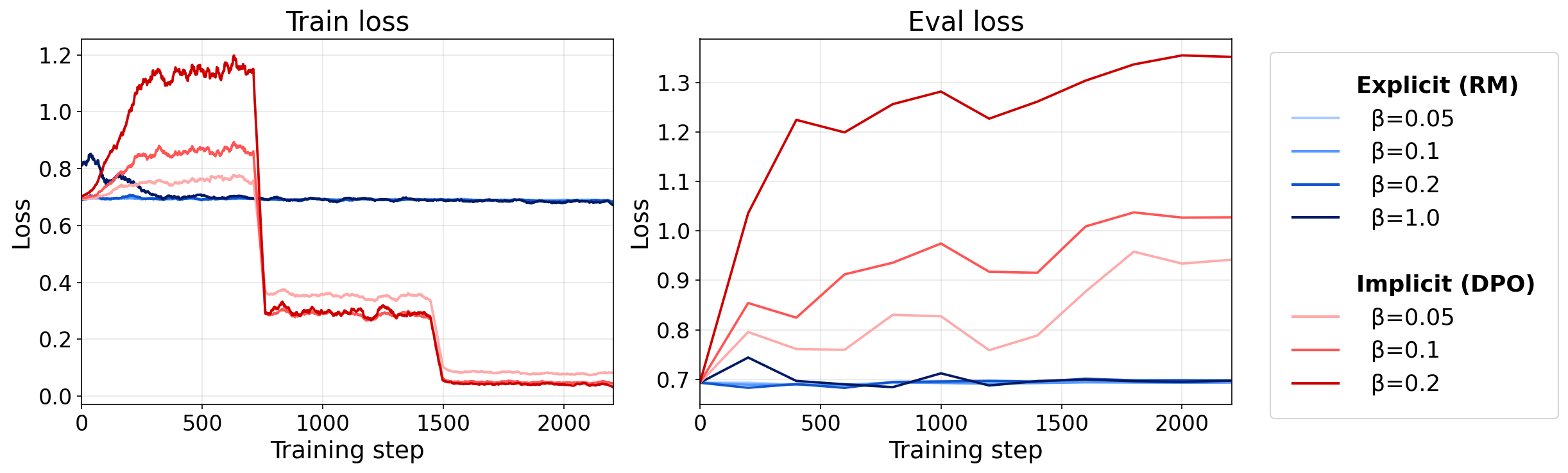}
    \caption{Randomization test on Pythia-1.4B with randomly assigned training preference labels. \textbf{Left:} Train loss on the randomized-label training set. Explicit (blue, all $\beta$) plateaus near the random-guessing loss and does not fit the randomly assigned labels. Implicit DPO (red, all $\beta$) drives the train loss close to zero. \textbf{Right:} Eval loss on the clean validation set with original labels. Explicit stays near the random-guessing loss, while the eval loss of Implicit DPO increases, showing that it fits the randomized training labels rather than learning transferable preference structure. The gap is consistent across $\beta$ values, isolating the contribution of the output parameterization from the shared backbone.}
    \label{fig:capacity}
\end{figure}

Figure~\ref{fig:capacity} reports train and eval loss across all $\beta$ values. On the randomized-label training set, the explicit model's train loss stays near the random-guessing level for every $\beta$, indicating that it does not fit the randomly assigned labels. In contrast, Implicit DPO drives the train loss close to zero, achieving near-perfect memorization of the randomized training preferences. On the clean validation set, the explicit model's eval loss also stays near the random-guessing level, consistent with learning no transferable preference structure. By contrast, Implicit DPO's eval loss increases substantially, indicating that it has fit the randomized training labels and moved away from the original preference signal. This behavior is a direct fingerprint of memorizing randomized labels rather than learning genuine preference structure. The separation is consistent across $\beta$ values with identical backbone, data, and regularization.

Taken together, the structural argument and the randomization test suggest that, in our matched training setup, the implicit parameterization provides a more flexible way to fit preference labels and is more prone to memorization than the explicit Bradley--Terry parameterization.

\subsection{Ablation on Preference Signals in Iterative methods}
\label{sec:implicit-iterative-dpo}

\begin{wrapfigure}{r}{0.39\textwidth}
    \vspace{-8pt}
    \centering
    \includegraphics[width=0.8\linewidth]{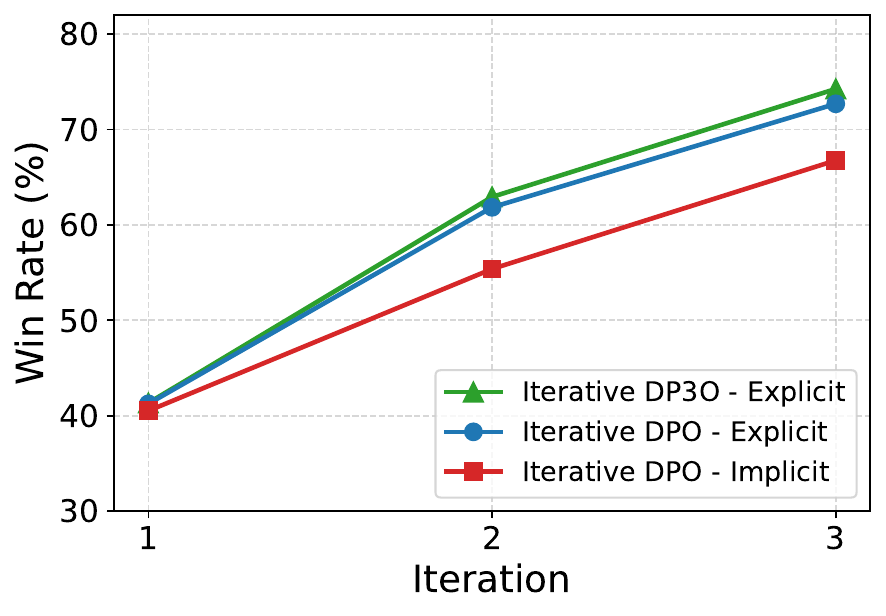}
    \caption{Ablation on preference signals in iterative alignment. Iterative DPO with an explicit preference model outperforms its implicit counterpart, while iterative DP3O further improves performance by using explicit soft probabilities.}
    \label{fig:iterative-explicit-vs-implicit}
\end{wrapfigure}
To further understand the role of explicit preference modeling and soft preference distillation in iterative alignment, we compare three iterative variants: iterative DPO with a separately trained explicit preference model, iterative DPO with an implicit preference model induced by the policy, and iterative DP3O, which uses the soft preference probabilities produced by the explicit preference model. All methods follow the same iterative training pipeline and differ only in how the preference signal for newly generated responses is constructed. As shown in Figure~\ref{fig:iterative-explicit-vs-implicit}, iterative DPO with the explicit preference model consistently outperforms the variant using the implicit preference model across all iterations. At the final iteration, iterative DPO with the explicit preference model achieves a win rate of $72.69\%$, compared with $66.76\%$ for the implicit variant. This suggests that the explicit preference model provides a stronger signal for selecting and labeling generated responses, and that the benefit of iterative DPO is not solely due to iterative data generation. Moreover, iterative DP3O further improves the final win rate to $74.25\%$ by using the soft preference probabilities from the same explicit preference model rather than converting them into hard winner--loser labels. These results indicate that explicit preference modeling is an important contributor to the offline--iterative gap, and that soft preference distillation can further leverage the information captured by the explicit preference model.

\newpage

\section{Proofs}
\label{appendix theory}

\subsection{Estimation Error of Preference Model}
\begin{theorem}
Let $p^\star(y_1\succ y_2\mid x)$ denote the true pairwise preference probability for $x\in\mathcal X$ and $y_1,y_2\in\mathcal Y$. Let $\rho\in\Delta(\mathcal X)$ be a prompt distribution and $\pi(\cdot\mid x)\in\Delta(\mathcal Y)$ a response distribution. For each $i\in[n]$, sample $x^{(i)}\sim\rho$ and $y_1^{(i)},y_2^{(i)}\overset{\mathrm{i.i.d.}}{\sim}\pi(\cdot\mid x^{(i)})$, then draw a preference according to $p^\star(y_1^{(i)}\succ y_2^{(i)}\mid x^{(i)})$. Let $(y_w^{(i)},y_l^{(i)})$ be the induced winner-loser ordering, and define $\mathcal D=\{(x^{(i)},y_w^{(i)},y_l^{(i)})\}_{i=1}^n$.

Let $\mathcal F$ be a finite class of functions
$p:\mathcal X\times\mathcal Y\times\mathcal Y\to[0,1]$
such that $p^\star\in\mathcal F$, and assume that
$0<p(y_1\succ y_2\mid x)<1$
for all $p\in\mathcal F$ and all $(x,y_1,y_2)\in\mathcal X\times\mathcal Y\times\mathcal Y$.
Let
$$
\widehat p
:=
\arg\max_{p\in\mathcal F}
\sum_{(x,y_w,y_l)\in\mathcal D}\log p(y_w\succ y_l\mid x).
$$
Then for any $\delta\in(0,1)$, with probability at least $1-\delta$,
$$
\mathbb E_{x\sim\rho,\;y_1,y_2\sim\pi(\cdot\mid x)}
\left[
D_{\mathrm H,\mathrm{Ber}}^2\bigl(\hat p(y_1\succ y_2\mid x),\,p^\star(y_1\succ y_2\mid x)\bigr)
\right]
\le
\frac{2\log(|\mathcal F|/\delta)}{n},
$$
where $D_{\mathrm H,\mathrm{Ber}}^2(a,b):=D_{\mathrm H}^2(\mathrm{Ber}(a),\mathrm{Ber}(b))$ and $D_{\mathrm H}^2(P,Q):=\int \bigl(\sqrt{p(z)}-\sqrt{q(z)}\bigr)^2\,dz$.
\end{theorem}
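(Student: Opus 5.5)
This is the standard finite-class MLE guarantee in squared Hellinger distance; the plan is the usual Chernoff-plus-union-bound argument.

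\textbf{Reformulation.} First observe that the data $(x,y_w,y_l)$ carry exactly the information of $(x,y_1,y_2,b)$ with $b\sim\mathrm{Ber}(p^\star(y_1\succ y_2\mid x))$. Indeed, $\log p(y_w\succ y_l\mid x)=\log\bigl(p(y_1\succ y_2\mid x)^b(1-p(y_1\succ y_2\mid x))^{1-b}\bigr)$. This uses $p(y_2\succ y_1\mid x)=1-p(y_1\succ y_2\mid x)$, which I will take as part of the model (the antisymmetry convention from the preference model in the preliminaries). So $\widehat p$ is a conditional MLE for the Bernoulli outcome given covariates $z=(x,y_1,y_2)$, where the covariate law $\rho\otimes\pi\otimes\pi$ does not depend on $p$. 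This step, making the winner--loser encoding equivalent to the Bernoulli likelihood, is the only place care is needed; the rest is standard.

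\textbf{Chernoff plus union bound.} For each fixed $p\in\mathcal F$, define the half log-likelihood ratio
\[
L_p=\tfrac12\sum_{i=1}^n\log\frac{p(b_i\mid z_i)}{p^\star(b_i\mid z_i)},
\]
which is well defined by the assumption $0<p<1$. Conditioning on $z_i$ gives
\[
\mathbb E\bigl[e^{L_p}\bigr]=\prod_i \mathbb E_z\sum_b\sqrt{p(b\mid z)p^\star(b\mid z)}=\bigl(1-\tfrac12\mathbb E_z D^2_{\mathrm H,\mathrm{Ber}}(p(z),p^\star(z))\bigr)^n,
\]
using $\sum_b\sqrt{pq}=1-\tfrac12 D_{\mathrm H}^2$ for the unnormalized-by-$\tfrac12$ convention in the statement. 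By Markov's inequality and a union bound over $\mathcal F$, with probability at least $1-\delta$, simultaneously for all $p\in\mathcal F$,
\[
L_p\le n\log\bigl(1-\tfrac12\mathbb E_z D^2_{\mathrm H}\bigr)+\log(|\mathcal F|/\delta)\le -\tfrac n2\,\mathbb E_z D^2_{\mathrm H}+\log(|\mathcal F|/\delta),
\]
where the last step uses $\log(1-u)\le -u$.

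\textbf{Conclusion.} Apply this bound at $p=\widehat p$. Since $p^\star\in\mathcal F$ and $\widehat p$ maximizes the likelihood, $L_{\widehat p}\ge 0$. Rearranging gives $\mathbb E_z D^2_{\mathrm H,\mathrm{Ber}}(\widehat p,p^\star)\le 2\log(|\mathcal F|/\delta)/n$, exactly the claimed constant. The data-dependence of $\widehat p$ is harmless because the bound holds uniformly over $\mathcal F$.
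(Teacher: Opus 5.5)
Your proposal is correct and follows essentially the paper's route: Markov's inequality on the square root of the likelihood ratio, whose expectation factorizes into the Bernoulli Hellinger affinity, then a union bound over $\mathcal F$, then $p^\star\in\mathcal F$ to force $\widehat p$ to weakly beat $p^\star$ in likelihood. The paper phrases this as bounding, for fixed $\varepsilon$, the probability that some $p$ with expected squared Hellinger distance above $\varepsilon$ matches or beats $p^\star$, rather than as a uniform bound on $L_p$, but it is the same argument.

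Your write-up is slightly more careful on two points. First, your identity $\sum_b\sqrt{p(b\mid z)\,p^\star(b\mid z)}=1-\tfrac12 D^2_{\mathrm H,\mathrm{Ber}}$ is the one that matches the unnormalized Hellinger convention in the statement. The paper writes $1-D^2_{\mathrm H,\mathrm{Ber}}$, and the extra factor $2$ in its choice of $\varepsilon$ is what absorbs the discrepancy. Second, you state explicitly the antisymmetry $p(y_2\succ y_1\mid x)=1-p(y_1\succ y_2\mid x)$. The paper uses this silently when it rewrites the winner--loser likelihood as a Bernoulli likelihood, and the result genuinely needs it.
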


We present a self-contained proof for the finite-class preference modeling setting. The proof is inspired by the likelihood-ratio and Hellinger-affinity arguments underlying Zhang (2006), but it does not directly invoke the full information-complexity theorem of Zhang (2006). While Zhang's general framework includes an explicit KL-entropy complexity penalty, in our finite-class setting with a uniform prior this penalty is constant across all deterministic estimators. Indeed, for any estimator selecting $p\in\mathcal F$, we have $D_{\mathrm{KL}}(\delta_p\|\pi)=\log|\mathcal F|$, which is independent of $p$. Hence, adding this penalty does not change the optimizer, and the corresponding complexity-regularized estimator coincides with the standard MLE considered here.

For each $p\in\mathcal P$, define the conditional Bernoulli model
$$
q_p(b\mid x,y_1,y_2)
:=
p(y_1\succ y_2\mid x)^b
\bigl(1-p(y_1\succ y_2\mid x)\bigr)^{1-b},
\qquad b\in\{0,1\},
$$
and similarly
$$
q^\star(b\mid x,y_1,y_2)
:=
\bigl(p^\star(y_1\succ y_2\mid x)\bigr)^b
\bigl(1-p^\star(y_1\succ y_2\mid x)\bigr)^{1-b}.
$$

\begin{lemma}[Hellinger transform identity]
For any $p\in\mathcal P$,
$$
\mathbb E\!\left[
\sqrt{
\prod_{i=1}^n
\frac{
q_p\bigl(b^{(i)}\mid x^{(i)},y_1^{(i)},y_2^{(i)}\bigr)
}{
q^\star\bigl(b^{(i)}\mid x^{(i)},y_1^{(i)},y_2^{(i)}\bigr)
}
}
\right]
=
\left(
1-\Delta(p)
\right)^n,
$$
where
$$
\Delta(p):=
\mathbb E_{x\sim\rho,\;y_1,y_2\sim\pi(\cdot\mid x)}
\left[
D_{\mathrm H,\mathrm{Ber}}^2\bigl(
p(y_1\succ y_2\mid x),
p^\star(y_1\succ y_2\mid x)
\bigr)
\right].
$$
\end{lemma}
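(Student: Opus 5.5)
The plan is to prove the identity one sample at a time and then use independence across $i\in[n]$. First I will make the data encoding precise. For each $i$, let $b^{(i)}:=\mathbbm{1}[y_1^{(i)}\succ y_2^{(i)}]$ in the original (unordered) draw. Then $(x^{(i)},y_1^{(i)},y_2^{(i)},b^{(i)})$ are i.i.d. with joint law $\rho(x)\,\pi(y_1\mid x)\,\pi(y_2\mid x)\,q^\star(b\mid x,y_1,y_2)$. I will also record that, under the convention $p(y_2\succ y_1\mid x)=1-p(y_1\succ y_2\mid x)$, the log-likelihood term $\log p(y_w\succ y_l\mid x)$ used to define $\widehat p$ equals $\log q_p(b\mid x,y_1,y_2)$. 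So the lemma is stated in exactly the variables the MLE sees. The assumption $0<p<1$ on $\mathcal F$, and $p^\star\in\mathcal F$, guarantee that every ratio $q_p/q^\star$ is finite and positive, so the square root is well defined.

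Next, I will factor the expectation. The square root of the product equals the product of the square roots, and the $n$ tuples are independent. Hence the expectation of the product equals the product of expectations, and each factor has the same value $\mathbb E\bigl[\sqrt{q_p(b\mid x,y_1,y_2)/q^\star(b\mid x,y_1,y_2)}\bigr]$.

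For a single factor, I will condition on $(x,y_1,y_2)$ and integrate out $b\sim q^\star(\cdot\mid x,y_1,y_2)$ first. Writing $a=p(y_1\succ y_2\mid x)$ and $c=p^\star(y_1\succ y_2\mid x)$, this gives
\begin{equation*}
\sum_{b\in\{0,1\}} q^\star(b)\sqrt{\frac{q_p(b)}{q^\star(b)}}=\sqrt{ac}+\sqrt{(1-a)(1-c)},
\end{equation*}
which is the Bhattacharyya affinity of $\mathrm{Ber}(a)$ and $\mathrm{Ber}(c)$. Expanding $(\sqrt a-\sqrt c)^2+(\sqrt{1-a}-\sqrt{1-c})^2$ shows that this affinity equals $1-\tfrac12 D_{\mathrm H,\mathrm{Ber}}^2(a,c)$. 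Taking the outer expectation over $x\sim\rho$ and $y_1,y_2\sim\pi(\cdot\mid x)$ then yields a per-sample factor of $1-\tfrac12\Delta(p)$. Raising to the $n$-th power gives the identity.

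The main obstacle is a normalization issue, not a conceptual one. With $D_{\mathrm H}^2$ defined as $\int(\sqrt p-\sqrt q)^2$ without the factor $\tfrac12$, the exact per-sample factor is $1-\Delta(p)/2$, not $1-\Delta(p)$. So I will either read $\Delta$ with the $\tfrac12$-normalized Hellinger distance, or state the identity as $(1-\Delta(p)/2)^n$. I will then track this constant into Theorem~\ref{thm:estimation_error}.

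That downstream step goes as follows. I will apply Markov's inequality to the nonnegative variable $\sqrt{\prod_i q_p/q^\star}$ and take a union bound over $\mathcal F$, which gives, with probability at least $1-\delta$, $\tfrac12\sum_i\log(q_p/q^\star)\le \log(|\mathcal F|/\delta)+n\log(1-\Delta(p)/2)$ simultaneously for all $p\in\mathcal F$. For $p=\widehat p$ the left side is nonnegative, because $\widehat p$ maximizes the likelihood and $p^\star\in\mathcal F$. Using $\log(1-u)\le -u$, this yields $\Delta(\widehat p)\le 2\log(|\mathcal F|/\delta)/n$, which is the stated constant $2$ under the $D_{\mathrm H}^2$ definition given in the paper.
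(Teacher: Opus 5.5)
Your argument is the same as the paper's (factor the expectation over the i.i.d.\ tuples, condition on $(x,y_1,y_2)$, evaluate the Bernoulli affinity $\sqrt{ac}+\sqrt{(1-a)(1-c)}$), and your normalization flag is correct: with $D_{\mathrm H}^2$ defined without the factor $\tfrac12$, this affinity equals $1-\tfrac12 D_{\mathrm H,\mathrm{Ber}}^2(a,c)$. For example, $a=0.9$, $c=0.1$ gives affinity $0.6$ but $D_{\mathrm H,\mathrm{Ber}}^2=0.8$. So the paper's step ``the inner term equals $1-D_{\mathrm H,\mathrm{Ber}}^2$'' is off by a factor of two, the displayed identity holds only for the $\tfrac12$-normalized Hellinger distance, and the general form is $(1-\Delta(p)/2)^n$. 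Nothing downstream breaks: replacing $e^{-n\varepsilon}$ by $e^{-n\varepsilon/2}$ in the paper's union bound gives exactly $|\mathcal F|e^{-n\varepsilon/2}=\delta$ at $\varepsilon=2\log(|\mathcal F|/\delta)/n$ (the paper's choice had a factor-two slack, yielding $\delta^2/|\mathcal F|$), so Theorem~\ref{thm:estimation_error} keeps its constant $2$, consistent with your own derivation.
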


\noindent\textit{Proof of lemma.}
By independence,
\begin{align*}
&\mathbb E\!\left[
\sqrt{
\prod_{i=1}^n
\frac{
q_p\bigl(b^{(i)}\mid x^{(i)},y_1^{(i)},y_2^{(i)}\bigr)
}{
q^\star\bigl(b^{(i)}\mid x^{(i)},y_1^{(i)},y_2^{(i)}\bigr)
}
}
\right] \\
&\qquad=
\prod_{i=1}^n
\mathbb E\!\left[
\sqrt{
\frac{
q_p\bigl(b^{(i)}\mid x^{(i)},y_1^{(i)},y_2^{(i)}\bigr)
}{
q^\star\bigl(b^{(i)}\mid x^{(i)},y_1^{(i)},y_2^{(i)}\bigr)
}
}
\right].
\end{align*}
Since the samples are identically distributed, it suffices to evaluate one factor:
\begin{align*}
&\mathbb E\!\left[
\sqrt{
\frac{q_p(b\mid x,y_1,y_2)}{q^\star(b\mid x,y_1,y_2)}
}
\right] \\
&\qquad=
\mathbb E_{x\sim\rho,\;y_1,y_2\sim\pi(\cdot\mid x)}
\left[
\sum_{b\in\{0,1\}}
q^\star(b\mid x,y_1,y_2)
\sqrt{
\frac{q_p(b\mid x,y_1,y_2)}{q^\star(b\mid x,y_1,y_2)}
}
\right] \\
&\qquad=
\mathbb E_{x\sim\rho,\;y_1,y_2\sim\pi(\cdot\mid x)}
\left[
\sum_{b\in\{0,1\}}
\sqrt{
q_p(b\mid x,y_1,y_2)\,q^\star(b\mid x,y_1,y_2)
}
\right].
\end{align*}
For Bernoulli distributions, the inner term equals
$$
1-
D_{\mathrm H,\mathrm{Ber}}^2\bigl(
p(y_1\succ y_2\mid x),
p^\star(y_1\succ y_2\mid x)
\bigr).
$$
Taking expectation yields $1-\Delta(p)$, and taking the product over $i=1,\dots,n$ proves the claim. \hfill$\square$

\begin{proof}
For each $i\in[n]$, let
$$
b^{(i)}:=\mathbf 1\{y_1^{(i)}\succ y_2^{(i)}\}.
$$
Then the tuples
$$
\bigl(x^{(i)},y_1^{(i)},y_2^{(i)},b^{(i)}\bigr), \qquad i=1,\dots,n,
$$
are i.i.d., with
$$
x^{(i)}\sim\rho,\qquad
y_1^{(i)},y_2^{(i)}\overset{\mathrm{i.i.d.}}{\sim}\pi(\cdot\mid x^{(i)}),\qquad
b^{(i)}\sim q^\star(\cdot\mid x^{(i)},y_1^{(i)},y_2^{(i)}).
$$
Moreover, by definition of the winner-loser ordering,
$$
\sum_{(x,y_w,y_l)\in\mathcal D}\log p(y_w\succ y_l\mid x)
=
\sum_{i=1}^n \log q_p\bigl(b^{(i)}\mid x^{(i)},y_1^{(i)},y_2^{(i)}\bigr).
$$
Hence
$$
\widehat p
=
\arg\max_{p\in\mathcal P}
\sum_{i=1}^n \log q_p\bigl(b^{(i)}\mid x^{(i)},y_1^{(i)},y_2^{(i)}\bigr).
$$

Now fix any $p\in\mathcal P$. If
$$
\sum_{i=1}^n \log q_p\bigl(b^{(i)}\mid x^{(i)},y_1^{(i)},y_2^{(i)}\bigr)
\ge
\sum_{i=1}^n \log q^\star\bigl(b^{(i)}\mid x^{(i)},y_1^{(i)},y_2^{(i)}\bigr),
$$
then
$$
\prod_{i=1}^n
\frac{
q_p\bigl(b^{(i)}\mid x^{(i)},y_1^{(i)},y_2^{(i)}\bigr)
}{
q^\star\bigl(b^{(i)}\mid x^{(i)},y_1^{(i)},y_2^{(i)}\bigr)
}
\ge 1,
$$
and hence
$$
\sqrt{
\prod_{i=1}^n
\frac{
q_p\bigl(b^{(i)}\mid x^{(i)},y_1^{(i)},y_2^{(i)}\bigr)
}{
q^\star\bigl(b^{(i)}\mid x^{(i)},y_1^{(i)},y_2^{(i)}\bigr)
}
}
\ge 1.
$$
Therefore, by Markov's inequality and the Hellinger transform identity,
\begin{align*}
&\mathbb P\left(
\sum_{i=1}^n \log q_p\bigl(b^{(i)}\mid x^{(i)},y_1^{(i)},y_2^{(i)}\bigr)
\ge
\sum_{i=1}^n \log q^\star\bigl(b^{(i)}\mid x^{(i)},y_1^{(i)},y_2^{(i)}\bigr)
\right) \\
&\qquad\le
\mathbb E\!\left[
\sqrt{
\prod_{i=1}^n
\frac{
q_p\bigl(b^{(i)}\mid x^{(i)},y_1^{(i)},y_2^{(i)}\bigr)
}{
q^\star\bigl(b^{(i)}\mid x^{(i)},y_1^{(i)},y_2^{(i)}\bigr)
}
}
\right] \\
&\qquad=
(1-\Delta(p))^n
\le
e^{-n\Delta(p)},
\end{align*}
where the last step uses $1-u\le e^{-u}$.

For $\varepsilon>0$, define
$$
\mathcal P_\varepsilon:=\{p\in\mathcal P:\Delta(p)>\varepsilon\}.
$$
Since $p^\star\in\mathcal P$ and $\widehat p$ maximizes the empirical log-likelihood over $\mathcal P$, the event $\{\Delta(\widehat p)>\varepsilon\}$ implies that there exists some $p\in\mathcal P_\varepsilon$ such that
$$
\sum_{i=1}^n \log q_p\bigl(b^{(i)}\mid x^{(i)},y_1^{(i)},y_2^{(i)}\bigr)
\ge
\sum_{i=1}^n \log q^\star\bigl(b^{(i)}\mid x^{(i)},y_1^{(i)},y_2^{(i)}\bigr).
$$
Therefore, by the union bound,
$$
\mathbb P\bigl(\Delta(\widehat p)>\varepsilon\bigr)
\le
\sum_{p\in\mathcal P_\varepsilon}
\mathbb P\left(
\sum_{i=1}^n \log q_p\bigl(b^{(i)}\mid x^{(i)},y_1^{(i)},y_2^{(i)}\bigr)
\ge
\sum_{i=1}^n \log q^\star\bigl(b^{(i)}\mid x^{(i)},y_1^{(i)},y_2^{(i)}\bigr)
\right)
\le
|\mathcal P_\varepsilon|e^{-n\varepsilon}
\le
|\mathcal P|e^{-n\varepsilon}.
$$
Setting $\varepsilon=\frac{2\log(|\mathcal P|/\delta)}{n}$, it gives $
\mathbb P\bigl(\Delta(\widehat p)>\varepsilon\bigr)
\le
|\mathcal P|e^{-n\varepsilon}
=
|\mathcal P|e^{-2\log(|\mathcal P|/\delta)}
=
|\mathcal P|\left(\frac{\delta}{|\mathcal P|}\right)^2
\le
\delta.
$.

Hence, with probability at least $1-\delta$,
$$
\Delta(\widehat p)\le \frac{2\log(|\mathcal P|/\delta)}{n}.
$$
Substituting the definition of $\Delta(\widehat p)$ completes the proof.
\end{proof}

\subsection{Understanding the Benefits of Preference Distillation via Risk Decomposition}
We build upon the theoretical result based on \citep{menon2021statistical}, extending it from the original classification to RLHF setting.
\begin{lemma}
For any policy $\pi_t\in \Pi$ and $\mathcal{D}_t \sim \mathbb{P}_{\mathcal{D}_t}:$
\begin{equation*}
\mathbb{V}_{\mathcal{D}_t }\hat{R}_{\text{DP3O}}(\pi_t ; \mathcal{D}_t, p^*)\leq \mathbb{V}_{\mathcal{D}_t }\hat{R}_{\text{DPO}}(\pi_t ; \mathcal{D}_t).
\end{equation*}
\vspace{-0.5cm}
\label{appendix: lemma}
\end{lemma}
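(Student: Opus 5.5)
The plan is to reduce everything to a per-sample comparison and then use a conditional-expectation (Rao--Blackwell type) argument, following the template of \citet{menon2021statistical}. Since $\mathcal{D}_t$ consists of $n$ i.i.d.\ tuples, both empirical risks are averages of i.i.d.\ terms. Hence $\mathbb{V}_{\mathcal{D}_t}\hat R = \frac{1}{n}\mathbb{V}[\ell]$ for the corresponding single-sample loss $\ell$. It therefore suffices to compare the variances of the single-sample losses.

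First, I will rewrite both losses in terms of the unordered pair. Work with $(x,y_1,y_2,b)$ where $b=\mathbf 1\{y_1\succ y_2\}$, as in the proof of Theorem~\ref{thm:estimation_error}. Write $q_\pi := p_\pi(y_1\succ y_2\mid x)$ and $q^\star := p^\star(y_1\succ y_2\mid x)$. The DPO loss is then
\[
\ell_{\text{DPO}} = -\bigl[b\log q_\pi + (1-b)\log(1-q_\pi)\bigr].
\]
Both the ordered pair and the soft label depend only on $(x,y_1,y_2)$ and $b$. Plugging $p^\star$ into the smoothed target gives
\[
\ell_{\text{DP3O}} = \alpha\,\ell_{\text{DPO}} + (1-\alpha)\,\ell_{\text{soft}},
\]
where $\ell_{\text{soft}} = -\bigl[q^\star\log q_\pi + (1-q^\star)\log(1-q_\pi)\bigr]$. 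The key check at this step is that the soft term is invariant to the winner/loser relabeling. Swapping $y_w$ and $y_l$ swaps both coordinates of $p^\star$ and of $p_\pi$, so the inner product $p^{*\top}\log p_\pi$ is unchanged. Consequently $\ell_{\text{soft}}$ is a function of $Z:=(x,y_1,y_2)$ alone.

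Next comes the observation that drives the argument:
\[
\mathbb{E}[\ell_{\text{DPO}}\mid Z] = \ell_{\text{soft}},
\]
because $\mathbb{E}[b\mid Z]=q^\star$. I then decompose $\ell_{\text{DPO}} = \ell_{\text{soft}} + \varepsilon$, where $\varepsilon := \ell_{\text{DPO}}-\ell_{\text{soft}}$ satisfies $\mathbb{E}[\varepsilon\mid Z]=0$. This gives $\ell_{\text{DP3O}} = \ell_{\text{soft}} + \alpha\varepsilon$. Since $\varepsilon$ is conditionally mean zero, it is uncorrelated with any function of $Z$, and the law of total variance yields
\begin{align*}
\mathbb{V}[\ell_{\text{DP3O}}] &= \mathbb{V}[\ell_{\text{soft}}] + \alpha^2\,\mathbb{E}[\mathbb{V}(\ell_{\text{DPO}}\mid Z)],\\
\mathbb{V}[\ell_{\text{DPO}}] &= \mathbb{V}[\ell_{\text{soft}}] + \mathbb{E}[\mathbb{V}(\ell_{\text{DPO}}\mid Z)].
\end{align*}
Because $\alpha\in[0,1]$, we have $\alpha^2\le 1$, and the inequality follows. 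The bound is tight only when $\alpha=1$ or when $\mathbb{V}(\ell_{\text{DPO}}\mid Z)=0$ almost surely, i.e.\ when $q^\star\in\{0,1\}$. This matches the degenerate cases described after the lemma. Equivalently, the $\alpha=0$ case is just Jensen's inequality applied to $\mathbb{E}[\cdot\mid Z]$.

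The main obstacle is the bookkeeping in the first step, not the variance calculation. The definitions are stated in terms of the ordered triple $(x,y_l,y_w)$, so I must verify carefully that $\hat R_{\text{DP3O}}$ with $p^\star$ is a genuine function of $Z$ plus the $\alpha$-weighted hard part. In particular, the $+\alpha$ shift in $p_\alpha$ must be recombined correctly into $\alpha\ell_{\text{DPO}}$. I also need integrability of $\log q_\pi$, which holds since $0<q_\pi<1$ under the standing assumption. If the $\alpha$-dependence proves messy, I will prove the lemma first for $\alpha=0$ and then obtain general $\alpha$ from the convex-combination representation above.
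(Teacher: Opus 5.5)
Your argument is correct. Its core is the same as the paper's, but it is more general and more careful. The paper writes the DP3O per-sample loss as $\mathbb{E}_{l\mid x,y_w,y_l}[\log p_\pi(l\mid\cdot)]$ and applies Jensen, $(\mathbb{E}[X\mid Z])^2\le\mathbb{E}[X^2\mid Z]$. That is exactly your observation $\ell_{\text{soft}}=\mathbb{E}[\ell_{\text{DPO}}\mid Z]$ combined with $\mathbb{V}[\mathbb{E}[X\mid Z]]\le\mathbb{V}[X]$.

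Two things differ.
\begin{enumerate}
\item The paper's proof uses the unsmoothed target $p^\star$, so it only establishes the $\alpha=0$ case, even though $\hat R_{\text{DP3O}}$ is defined with $p^\star_\alpha$. Your decomposition $\ell_{\text{DP3O}}=\ell_{\text{soft}}+\alpha\varepsilon$, with $\mathbb{E}[\varepsilon\mid Z]=0$, covers every $\alpha\in[0,1]$. It also gives the exact identity $\mathbb{V}[\ell_{\text{DP3O}}]=\mathbb{V}[\ell_{\text{soft}}]+\alpha^2\,\mathbb{E}[\mathbb{V}(\ell_{\text{DPO}}\mid Z)]$, which shows the variance is nondecreasing in $\alpha$. (Your Minkowski-type fallback from the $\alpha=0$ case would also work.)
\item The paper conditions on the ordered triple $(x,y_w,y_l)$, which already determines the label. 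Your reformulation through the unordered pair $Z=(x,y_1,y_2)$ is what makes the conditioning step meaningful. So is your check that $p^{\star\top}\log p_\pi$ is invariant under swapping $y_w$ and $y_l$, which uses $g(z)+g(-z)=1$ and $\sigma(-z)=1-\sigma(z)$.
\end{enumerate}

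Two side claims need correcting; neither affects the lemma.
\begin{enumerate}
\item Your equality condition is incomplete. Since $\mathbb{V}(\ell_{\text{DPO}}\mid Z)=q^\star(1-q^\star)\bigl(\log\tfrac{q_\pi}{1-q_\pi}\bigr)^2$, equality for $\alpha<1$ holds whenever, for almost every $Z$, either $q^\star\in\{0,1\}$ or $q_\pi=1/2$. For example, at $\pi=\pi_{\text{ref}}$ both variances vanish.
\item Pointwise $0<q_\pi<1$ does not by itself make $\log q_\pi$ square-integrable. You need one of the following:
\begin{itemize}
\item $q_\pi$ bounded away from $0$ and $1$, as the paper later assumes via $\tau$;
\item a finite response space;
\item the observation that the inequality is trivial when $\mathbb{V}[\ell_{\text{DPO}}]=\infty$.
\end{itemize}
\end{enumerate}
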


\begin{proof}
We first denote the preference label $l$ and further define $p(l=1\mid x,y_w,y_l)=p(y_w\succ y_l\mid x) $ and $p(l=0\mid x,y_w,y_l)=p(y_w\prec y_l\mid x) $
\begin{equation*}
    \begin{aligned}
        \mathbb{V}_{\mathcal{D}_t} \hat{R}_{\text{DP3O}}\left(\pi_t ; \mathcal{D}_t, p^*\right)&= \mathbb{V}_{\mathcal{D}_t} \left[ -\frac{1}{n_t} \sum_{i=1}^{n_t}\left[\sum_{l}p\left(l^{(i)} \mid x^{(i)}, y_w^{(i)}, y_l^{(i)}\right)\log p_{\pi}\left(l^{(i)}\mid x^{(i)}, y_w^{(i)}, y_l^{(i)}\right)\right]\right]\\
        &= \frac{1}{n_t}\mathbb{V}  \left[\sum_{l}p\left(l^{(i)} \mid x^{(i)}, y_w^{(i)}, y_l^{(i)}\right)\log p_{\pi}\left(l^{(i)}\mid x^{(i)}, y_w^{(i)}, y_l^{(i)}\right)\right]\\
        &=\frac{1}{n_t}\mathbb{V}  \left[\underset{l\mid x,y_w,y_l}{\mathbb{E}}\left[\log p_{\pi}\left(l\mid x, y_w, y_l\right) \right]\right]\\
        & = \frac{1}{n_t}\left(\mathbb{E}  \Big[\underset{l\mid x,y_w,y_l}{\mathbb{E}}\left[\log p_{\pi}\left(l\mid x, y_w, y_l\right) \right]\Big]^2- \Big[\mathbb{E}\underset{l\mid x,y_w,y_l}{\mathbb{E}}\left[\log p_{\pi}\left(l\mid x, y_w, y_l\right) \right]\Big]^2 \right)\\
        & \leq \frac{1}{n_t}\left(\mathbb{E}  \Big[\underset{l\mid x,y_w,y_l}{\mathbb{E}}\left[\log p_{\pi}\left(l\mid x, y_w, y_l\right) \right]^2\Big]- \Big[\mathbb{E}\underset{l\mid x,y_w,y_l}{\mathbb{E}}\left[\log p_{\pi}\left(l\mid x, y_w, y_l\right) \right]\Big]^2 \right)\\
        &= \frac{1}{n_t} \mathbb{V}\left[\log p_\pi\left(l \mid x, y_w, y_l\right)\right]\\
        & = \mathbb{V}_{\mathcal{D}_t} \hat{R}_{\text{DPO}}\left(\pi_t ; \mathcal{D}_t\right),
    \end{aligned}
\end{equation*}
where the inequality follows from Jensen's inequality.
\end{proof}

\begin{theorem}
For any policy $\pi_t\in \Pi$ and $\hat p \in \mathcal{P}$, Then, for any $\delta \in(0,1)$, with probability at least $1-\delta$ over $\mathcal{D}_t \sim \mathbb{P}_{\mathcal{D}_t}:$
\begin{equation*}
R(\pi_t) \leq \hat{R}_{\text{DP3O}}(\pi_t ; \mathcal{D}_t,\hat p)+\mathcal{O}\left(\sqrt{\hat{\mathbb{V}}_{\mathcal{D}_t}(\pi_t,\hat p) \cdot \frac{\log \frac{|\Pi|}{\delta}}{n_t}}+\frac{\log |\Pi|}{n_t}\right)+\mathcal{O}\left(\mathbb{E}_{\mathcal{D}_t}\left\|\hat p-p^*\right\|_2\right).
\end{equation*}
\end{theorem}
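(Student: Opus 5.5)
The plan follows the risk-decomposition argument of \citet{menon2021statistical}, adapted to pairwise preferences. Write $\ell_\pi(x,y_w,y_l):=-\log p_\pi(x,y_l,y_w)\in\mathbb{R}^2$ for the vector of per-label log-losses. The central object is the intermediate ``oracle'' empirical risk $\hat R_{\text{DP3O}}(\pi_t;\mathcal D_t,p^*)$. We split
\begin{equation*}
R(\pi_t)-\hat R_{\text{DP3O}}(\pi_t;\mathcal D_t,\hat p)
=\Big[R(\pi_t)-\tilde R(\pi_t;\hat p)\Big]
+\Big[\tilde R(\pi_t;\hat p)-\hat R_{\text{DP3O}}(\pi_t;\mathcal D_t,\hat p)\Big],
\end{equation*}
where $\tilde R(\pi_t;\hat p):=\mathbb{E}_{\mathcal D}[\hat p_\alpha^\top \ell_{\pi_t}]$ is the population version of the DP3O objective with teacher $\hat p$ held fixed. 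This is exactly the ``$\hat p$ fixed'' convention stated before the theorem. The first bracket is a pure bias term. The second is a concentration term for an i.i.d. average.

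\textbf{Bias term.} Here $R(\pi_t)=\mathbb{E}[p^{*\top}\ell_{\pi_t}]$, and conditioning on the label gives $\mathbb{E}[p^{*\top}\ell]=\mathbb{E}[p^*_\alpha{}^\top \ell]$ in the appropriate sense. The hard-label part $\alpha e_1$ has conditional expectation $\alpha p^*$, so the true-probability target and the smoothed one induce the same population risk. Hence
\begin{equation*}
\big|R(\pi_t)-\tilde R(\pi_t;\hat p)\big|=\big|\mathbb{E}[(p^*_\alpha-\hat p_\alpha)^\top\ell_{\pi_t}]\big|=(1-\alpha)\,\big|\mathbb{E}[(p^*-\hat p)^\top\ell_{\pi_t}]\big|.
\end{equation*}
By Cauchy--Schwarz this is at most $(1-\alpha)\sup\|\ell_{\pi_t}\|_2\,\mathbb{E}\|\hat p-p^*\|_2$. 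To keep $\|\ell_{\pi_t}\|_2$ finite we need $p_\pi$ bounded away from $0$ and $1$ on the class $\Pi$, i.e.\ bounded implicit reward margins. We will state this as a standing assumption, absorbed into the $\mathcal O(\cdot)$, mirroring the $0<p<1$ assumption of Theorem~\ref{thm:estimation_error}. This yields the last term of the bound, including the $(1-\alpha)$ factor of the main-text statement.

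\textbf{Concentration term.} The summands $Z_i=\hat p_\alpha^{(i)\top}\ell_{\pi_t}^{(i)}$ are i.i.d. and bounded by some $M$ under the same boundedness assumption. Their mean is $\tilde R(\pi_t;\hat p)$. We apply the empirical Bernstein inequality of Maurer and Pontil, uniformly over $\pi\in\Pi$ via a union bound over the finite class. This gives, with probability $1-\delta$,
\begin{equation*}
\tilde R-\hat R_{\text{DP3O}}\le\sqrt{2\hat{\mathbb V}_{\mathcal D_t}(\pi_t,\hat p)\log(2|\Pi|/\delta)/n_t}+7M\log(2|\Pi|/\delta)/(3(n_t-1)).
\end{equation*}
Here $\hat{\mathbb V}$ is the sample variance of the $Z_i$. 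This produces the second term. The $\log(1/\delta)$ inside the lower-order term is absorbed into the $\mathcal O$. Combining the two brackets finishes the proof. The connection to Lemma~\ref{lemma} is that $\hat{\mathbb V}$ with $\hat p\approx p^*$ approximates the smaller oracle variance.

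\textbf{Main obstacle.} The delicate step is the bias identity for the hard-label component. Writing the DP3O target as $\alpha e_1+(1-\alpha)\hat p$ and replacing $e_1$ by $p^*$ in expectation requires that the $\mathcal D$-expectation be taken over the random label. The vector $p^*(x,y_l,y_w)$ is indexed by the ordered pair, so we must check that the reordering into $(y_w,y_l)$ is consistent with the symmetry $p(y_w\prec y_l)=1-p(y_w\succ y_l)$. Treating $\hat p$ as independent of $\mathcal D_t$ is the other point I would flag, and keep explicitly as an assumption rather than try to remove.
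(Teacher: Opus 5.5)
Your proposal is correct and matches the paper's proof essentially step for step. The paper uses the same split through the population DP3O risk $R(\pi_t;\hat p)=\mathbb{E}[\hat R_{\text{DP3O}}(\pi_t;\mathcal D_t,\hat p)]$ with $\hat p$ held fixed, the same Cauchy--Schwarz bias bound using $p_\pi\in[\tau,1-\tau]$, and the same Maurer--Pontil empirical Bernstein bound (their Corollary~5) taken uniformly over the finite class $\Pi$. Your version is slightly more careful than the appendix: the appendix drops the smoothing (and with it the $(1-\alpha)$ factor of the main-text statement) and asserts $R(\pi_t)=\mathbb{E}[\hat R_{\text{DP3O}}(\pi_t;\mathcal D_t,p^*)]$ without the swap-symmetry check you flag, though like the paper you only obtain $\log(|\Pi|/\delta)/n_t$ in the lower-order term, which matches the stated $\log|\Pi|/n_t$ only for fixed $\delta$.
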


\begin{proof}
Let $R\left(\pi_t ; \hat{p}\right) = \mathbb{E}\left[\hat{R}_{\text{DP3O}}\left(\pi_t ; \mathcal{D}_t, \hat p\right)\right]$. Then we have 

\begin{equation*}
\begin{aligned}
   \left|  R\left(\pi_t ; \hat{p}\right) - R\left(\pi_t\right) \right| &= \left|\mathbb{E}\left[\hat{R}_{\mathrm{DP} 3 \mathrm{O}}\left(\pi_t ; \mathcal{D}_t, \hat{p}\right)\right] - \mathbb{E} \left[\hat{R}_{\mathrm{DP} 3 \mathrm{O}}\left(\pi_t ; \mathcal{D}_t, p^*\right)\right] \right|\\
   & =  \left|\mathbb{E}\left[\hat p\left(x, y_l, y_w\right)^{\top} \log p_{\pi_t}\left(x, y_l, y_w\right)-p^*\left(x, y_l, y_w\right)^{\top} \log p_{\pi_t}\left(x, y_l, y_w\right)\right] \right| \\
   & \leq \mathbb{E}\left[\left|\hat p\left(x, y_l, y_w\right)^{\top} \log p_{\pi_t}\left(x, y_l, y_w\right)-p^*\left(x, y_l, y_w\right)^{\top} \log p_{\pi_t}\left(x, y_l, y_w\right)\right|\right] \\
   & = \mathbb{E}\left[\left|\left(\hat p\left(x, y_l, y_w\right)^{\top}-p^*\left(x, y_l, y_w\right)^{\top}\right) \log p_{\pi_t}\left(x, y_l, y_w\right)\right|\right] \\
   & \leq  \mathbb{E}\left[\left\|\hat p-p^*\right\|_2 \cdot\left\| \log  p_{\pi_t}\left(x, y_l, y_w\right)\right\|_2 \right]\\
   &  \leq  \mathbb{E}\left[\left\|\hat p-p^*\right\|_2 \cdot\sqrt{\log ^2(\tau)+\log ^2(1-\tau)} \right]\\
   & = C \cdot  \mathbb{E}\left[\left\|\hat p-p^*\right\|_2 \right].\\
\end{aligned}
\end{equation*}

where $C=\sqrt{ \log ^2(\tau)+\log ^2(1-\tau)}$. The second inequality follows from the Cauchy-Schwarz inequality, and the final inequality is a consequence of the boundedness of $p_\pi$. We can directly get $R\left(\pi_t\right) \leq R\left(\pi_t ; \hat{p}\right)+C \cdot \mathbb{E}\left[\left\|\hat{p}-p^*\right\|_2\right]$ following the last equality. 

Using Lemma \ref{lemma: simple bound}, we can obtain that with probability $1-\delta$:

\begin{equation}
R(\pi_t;\hat p) \leq \hat{R}_{\text{DP3O}}(\pi_t ; \mathcal{D}_t,\hat p)+\mathcal{O}\left(\sqrt{\hat{\mathbb{V}}_{\mathcal{D}_t}(\pi_t, \hat{p}) \cdot \frac{\log \frac{|\Pi|}{\delta}}{n_t}}+\frac{\log \frac{|\Pi|}{\delta}}{n_t}\right)
\label{eq:bound}
\end{equation}
where $\hat{\mathbb{V}}_{\mathcal{D}_t}\left(\pi_t, \hat{p}\right)$ is the empirical variance of the DP3O loss.

Then we have 
\begin{equation*}
\begin{aligned}
    R\left(\pi_t\right) &\leq R\left(\pi_t ; \hat{p}\right)+C \cdot  \mathbb{E}\left[\left\|\hat p-p^*\right\|_2 \right].\\
    &\leq \hat{R}\left(\pi_t ; \mathcal{D}_t, \hat{p}\right)+\mathcal{O}\left(\sqrt{\hat{\mathbb{V}}_{\mathcal{D}_t}\left(\pi_t, \hat{p}\right) \cdot \frac{\log \frac{|\Pi|}{\delta}}{n_t}}+\frac{\log \frac{|\Pi|}{\delta}}{n_t}\right) + \mathcal{O} \left(\mathbb{E}\left[\left\|\hat{p}-p^*\right\|_2\right]\right)
    \end{aligned}.
\end{equation*}
The last inequality directly follows (\ref{eq:bound}).
\end{proof}

\begin{lemma}

\begin{equation}
    R\left(\pi_t ; \hat{p}\right) \leq \hat{R}\left(\pi_t ; \mathcal{D}_t, \hat{p}\right)+\mathcal{O}\left(\sqrt{\hat{\mathbb{V}}_{\mathcal{D}_t}\left(\pi_t, \hat{p}\right) \cdot \frac{\log \frac{|\Pi|}{\delta}}{n_t}}+\frac{\log \frac{|\Pi|}{\delta}}{n_t}\right)
\end{equation}
\label{lemma: simple bound}
\end{lemma}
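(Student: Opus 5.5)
The lemma is a uniform empirical-Bernstein bound over the finite class $\Pi$, with the teacher $\hat p$ held fixed. We follow the treatment of \citet{menon2021statistical} (Maurer--Pontil empirical Bernstein).

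\textbf{Step 1: per-sample losses.} For a fixed $\pi_t\in\Pi$, define the per-sample loss
\[
Z_i(\pi_t) := -\hat p_\alpha\bigl(x^{(i)},y_l^{(i)},y_w^{(i)}\bigr)^{\top}\log p_{\pi_t}\bigl(x^{(i)},y_l^{(i)},y_w^{(i)}\bigr).
\]
Then $\hat R(\pi_t;\mathcal D_t,\hat p)=\frac1{n_t}\sum_i Z_i(\pi_t)$ and $R(\pi_t;\hat p)=\mathbb E[Z_i(\pi_t)]$. The $Z_i$ are i.i.d. because $\hat p$ is fixed, as the paper stipulates.

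For boundedness we use the same device as in the proof of Theorem~\ref{thm:gen_bound}: assume $p_{\pi}\in[\tau,1-\tau]$. The soft target $\hat p_\alpha$ is a probability vector, so
\[
0\le Z_i\le B:=\log(1/\tau).
\]

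\textbf{Step 2: empirical Bernstein for a single policy.} Apply the empirical Bernstein inequality of Maurer and Pontil (2009) to the rescaled variables $Z_i/B\in[0,1]$. With probability at least $1-\delta'$,
\[
\mathbb E Z-\frac1{n_t}\sum_i Z_i\le \sqrt{\frac{2\hat{\mathbb V}_{\mathcal D_t}(\pi_t,\hat p)\log(2/\delta')}{n_t}}+\frac{7B\log(2/\delta')}{3(n_t-1)},
\]
where $\hat{\mathbb V}$ is the sample variance of the $Z_i$.

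\textbf{Step 3: uniformity over $\Pi$.} Take a union bound over $\Pi$ with $\delta'=\delta/|\Pi|$. This replaces $\log(2/\delta')$ by $\log(2|\Pi|/\delta)$, which is $\mathcal O(\log(|\Pi|/\delta))$. The resulting bound holds simultaneously for every $\pi_t$, including the data-dependent minimizer of the DP3O objective. Absorbing $B$, the constants, and $n_t-1\asymp n_t$ into $\mathcal O(\cdot)$ gives the stated inequality.

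\textbf{Main obstacle.} The difficulty is not the concentration step but making the hypotheses honest.

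First, boundedness of the loss requires $p_\pi$ to stay away from $0$ and $1$ uniformly over $\Pi$. I would state this explicitly, since the same constant $\tau$ already appears in the proof of Theorem~\ref{thm:gen_bound}.

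Second, the i.i.d. structure requires $\hat p$ to be independent of $\mathcal D_t$. If $\hat p$ is fitted on the same data, I would either use sample splitting, or take a further union bound over $\mathcal P$. The latter adds $\log|\mathcal P|$ to the complexity term, in the spirit of Theorem~\ref{thm:estimation_error}.
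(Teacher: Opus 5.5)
Your proposal is correct and is essentially the paper's argument. The paper simply invokes Corollary 5 of Maurer and Pontil (2009), which is exactly what you carry out: the single-function empirical Bernstein inequality applied to the loss rescaled to $[0,1]$, followed by a union bound over the finite class $\Pi$. The hypotheses you make explicit (the uniform bound $p_\pi\in[\tau,1-\tau]$, and $\hat p$ being fixed independently of $\mathcal{D}_t$) are the ones the paper uses implicitly, through the constant $\tau$ and its remark that $\hat p$ is treated as fixed.
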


The Lemma mainly adopted Corollary 5 from \citep{maurer2009empirical}, so we omitted the proof here.

\newpage
\section{Qualitative Study}

Here we present two qualitative examples from AlpacaEval 2 generated by the Mistral model after training with DPO (Figure~\ref{appendix:dpo_example}) and DP3O (Figure~\ref{appendix:dp3o_example}). These examples highlight the differences in response style and presentation quality between the two methods. Compared to the response produced by the DPO-trained model, the generation from DP3O is more structured and demonstrates a clearer hierarchical organization of information. Such organization improves the logical flow of the response and makes individual points easier to identify. As a result, the DP3O output is more readable and presents the content in a clearer and more accessible way.

\begin{figure*}[t]
\centering
\footnotesize
\setlength{\tabcolsep}{3pt}

\caption{Two AlpacaEval 2 generations from the Mistral model after training with DPO and DP3O.}
\label{fig:qualitative_study}

\vspace{2mm}
\textbf{(a) DPO}
\label{appendix:dpo_example}

{\ttfamily\scriptsize
\begin{tabularx}{0.98\textwidth}{@{}r X@{}}
\toprule
\textbf{Input} $\to$ & Q: How can you determine if a person is genuinely interested in a conversation or simply being polite? \\
\midrule
\textbf{Mistral DPO} $\to$ &
As an AI, I don't have access to contextual information or nonverbal cues, but here are some general signs that may indicate whether someone is genuinely interested in a conversation or being polite: \\
& 1. Active listening: A genuinely interested person will actively listen to what you're saying, ask questions, and provide feedback or comments. They may also maintain eye contact, nod, or use verbal cues like “uh-huh” or “I see” to show they're engaged. \\
& 2. Follow-up questions: A genuine interest in the conversation often leads to follow-up questions. If someone is asking questions that show they're interested in learning more about the topic, it's a good sign they're engaged. \\
& 3. Body language: Nonverbal cues like leaning in, smiling, and mirroring your body language can indicate genuine interest. However, note that some people may have different body language styles, so this may not always be a reliable indicator. \\
& 4. Topic continuation: If the conversation flows naturally, and both parties are contributing equally, it's a sign of mutual interest. If one person dominates the conversation or keeps steering it back to their own interests, it may indicate a lack of genuine interest. \\
& 5. Time spent: If the conversation lasts longer than expected or planned, it's a sign that both parties are engaged and interested. \\
& 6. Follow-up actions: If someone shows genuine interest, they may follow up with you after the conversation, whether it's through a message, email, or phone call. \\
& Remember, however, that sometimes people may be polite or professional, even if they're not genuinely interested. It's essential to consider the context and the relationship you have with the person to determine their level of interest accurately. \\
\bottomrule
\end{tabularx}
}

\vspace{4mm}
\textbf{(b) DP3O}
\label{appendix:dp3o_example}

{\ttfamily\scriptsize
\begin{tabularx}{0.98\textwidth}{@{}r X@{}}
\toprule
\textbf{Input} $\to$ & Q: How can you determine if a person is genuinely interested in a conversation or simply being polite? \\
\midrule
\textbf{Mistral DP3O} $\to$ &
Here are some cues to help determine whether a person is genuinely interested in a conversation or simply being polite: \\
& 1. Active Listening: If someone is actively engaged in the conversation, asking questions, and responding thoughtfully, they are likely interested in the topic. \\
& 2. Body Language: Notice if the person maintains eye contact, leans in, or nods during the conversation. These nonverbal cues indicate engagement and interest. \\
& 3. Follow-up Questions: If the person asks follow-up questions to clarify or learn more, it suggests they are invested in the conversation. \\
& 4. Participation: Observe how involved the person is in the conversation. Are they sharing their own thoughts and experiences, or are they mostly offering short, polite responses? \\
& 5. Delay in Responding: If someone takes a pause to consider their response or seems genuinely interested in what the other person has to say, they may be more engaged than polite. \\
& 6. Topic Choice: If the person seems genuinely interested in the topic, they may bring up related topics or ask for more information to continue the conversation. \\
& Remember, while these cues can help you determine if someone is genuinely interested, sometimes individuals may be polite but disengaged due to various reasons. Be mindful and open to the possibility that someone might be struggling to engage, and try to create an inclusive, welcoming environment. \\
\bottomrule
\end{tabularx}
}
\end{figure*}

\end{document}